\documentclass[sigconf]{acmart}

\settopmatter{printacmref=true}

\fancyhead{}

\usepackage{bm}
\usepackage{booktabs}
\usepackage{balance}
\usepackage{graphicx}
\usepackage{caption}
\usepackage{subfigure}
\usepackage{amssymb,amsfonts}
\usepackage{algorithmic}
\usepackage[linesnumbered,ruled]{algorithm2e}
\usepackage{textcomp}
\usepackage{xcolor}
\usepackage{amsmath}
\usepackage{mathtools}
\usepackage{multirow}

\def\BibTeX{{\rm B\kern-.05em{\sc i\kern-.025em b}\kern-.08emT\kern-.1667em\lower.7ex\hbox{E}\kern-.125emX}}
    
\setcopyright{acmlicensed} 
\begin{document}

\fancyhead{}

\copyrightyear{2019}
\acmYear{2019}
\acmConference[CIKM '19]{The 28th ACM International Conference on Information and
Knowledge Management}{November 3--7, 2019}{Beijing, China}
\acmPrice{15.00}
\acmDOI{12.3456}
\acmISBN{123-4-5678-9102-3/19/11}

\title{Privacy-Preserving Tensor Factorization for Collaborative Health Data Analysis}

\newcommand{\methodName}{DPFact}

\author{Jing Ma}
\affiliation{%
  \institution{Emory University}
}
\email{jing.ma@emory.edu}

\author{Qiuchen Zhang}
\affiliation{%
  \institution{Emory University}
}
\email{qzhan84@emory.edu}

\author{Jian Lou}
\affiliation{%
  \institution{Emory University}
}
\email{jian.lou@emory.edu}

\author{Joyce. C. Ho}
\affiliation{%
  \institution{Emory University}
}
\email{joyce.c.ho@emory.edu}

\author{Li Xiong}
\affiliation{%
  \institution{Emory University}
}
\email{lxiong@emory.edu}

\author{Xiaoqian Jiang}
\affiliation{%
  \institution{UT Health Science Center at Houston}
}
\email{xiaoqian.jiang@uth.tmc.edu}

\begin{abstract}
Tensor factorization has been demonstrated as an efficient approach for computational phenotyping, where massive electronic health records (EHRs) are converted to concise and meaningful clinical concepts. While distributing the tensor factorization tasks to local sites can avoid direct data sharing, it still requires the exchange of intermediary results which could reveal sensitive patient information. 
Therefore, the challenge is how to jointly decompose the tensor under rigorous and principled privacy constraints, while still support the model's interpretability.

We propose \methodName, a privacy-preserving collaborative tensor factorization method for computational phenotyping using EHR.
It embeds advanced privacy-preserving mechanisms with collaborative learning.
Hospitals can keep their EHR database private but also collaboratively learn meaningful clinical concepts by sharing differentially private intermediary results.
Moreover, \methodName~solves the heterogeneous patient population using a structured sparsity term.
In our framework, each hospital decomposes its local tensors, and sends the updated intermediary results with output perturbation every several iterations to a semi-trusted server which generates the phenotypes.
The evaluation on both real-world and synthetic datasets demonstrated that under strict privacy constraints, our method is more accurate and communication-efficient than state-of-the-art baseline methods.
\end{abstract}

\begin{CCSXML}
<ccs2012>
<concept>
<concept_id>10002978.10002991.10002995</concept_id>
<concept_desc>Security and privacy~Privacy-preserving protocols</concept_desc>
<concept_significance>500</concept_significance>
</concept>
<concept>
<concept_id>10010147.10010257.10010293.10010309</concept_id>
<concept_desc>Computing methodologies~Factorization methods</concept_desc>
<concept_significance>500</concept_significance>
</concept>
<concept>
<concept_id>10010405.10010444.10010449</concept_id>
<concept_desc>Applied computing~Health informatics</concept_desc>
<concept_significance>300</concept_significance>
</concept>
</ccs2012>
\end{CCSXML}

\ccsdesc[500]{Security and privacy~Privacy-preserving protocols}
\ccsdesc[500]{Computing methodologies~Factorization methods}
\ccsdesc[300]{Applied computing~Health informatics}

\keywords{Phenotyping; Tensor Factorization; Collaborative Learning; Differential Privacy}

\maketitle

\section{Introduction}
Electronic Health Records (EHRs) have become an important source of comprehensive information for patients' clinical histories.
While EHR data can help advance biomedical discovery, this requires an efficient conversion of the data to succinct and meaningful patient characterizations.
Computational phenotyping is the process of transforming the noisy, massive EHR data into meaningful medical concepts that can be used to predict the risk of disease for an individual, or the response to drug therapy.
Phenotyping can be used to assist precision medicine, speedup biomedical discovery, and improve healthcare quality \cite{wei2015extracting, richesson2016clinical}. 

Yet, extracting precise and meaningful phenotypes from EHRs is challenging because observations in EHRs are high-dimensional and heterogeneous,
which leads to poor interpretability and research quality for scientists \cite{wei2015extracting}. Traditional phenotyping approaches require the involvement of medical domain experts, which is time-consuming and labor-intensive. Recently, unsupervised learning methods have been demonstrated as a more efficient approach for computational phenotyping. Although these methods do not require experts to manually label the data, they require large volumes of EHR data.
A popular unsupervised phenotyping approach is tensor factorization  \cite{kim2017federated, wang2015rubik,ho2014marble}.
Not only can tensors capture the interactions between multiple sources (e.g, specific procedures that are used to treat a disease), it can identify patient subgroups and extract concise and potentially more interpretable results by utilizing the multi-way structure of a tensor. 

However, one existing barrier for high-throughput tensor factorization is that EHRs are fragmented and distributed among independent medical institutions, where healthcare practises are different due to heterogeneous patients populations. One of the reasons is that different hospitals or medical sites differ in the way they manage patients \cite{xu2016hospital}. Moreover, effective phenotyping requires a large amount of data to guarantee its reliance and generalizability. Simply analyzing data from single source leads to poor accuracy and bias, which would reduce the quality and efficiency of patients' care. 

Recent studies have suggested that the integration of health records can provide more benefits \cite{greenhalgh2010adoption}, which motivated the application of federated tensor learning framework \cite{kim2017federated}. It can mitigate privacy issues under the distributed data setting while achieves high global accuracy and data harmonization via federated computation.
But this method has inherent limitations of federated learning: 1) high communication cost; 2) reduced accuracy due to local non-IID data (i.e., patient heterogeneity); and 3) no formal privacy guarantee of the intermediary results shared between local sites and the server, which makes patient data at risk of leakage.

In this paper, we propose \methodName, a differentially private collaborative tensor factorization framework based on Elastic Averaging Stochastic Gradient Descent (EASGD) for computational phenotyping. \methodName~assumes all sites share a common model which can be learnt jointly from each site through communication with a central parameter server.
Each site performs its own tensor factorization task to discover both common and distinct latent components, 
while benefiting from the intermediary results generated by other sites. The intermediary results uploaded still contain sensitive information about the patients. Several studies have shown that machine learning models can be used to extract sensitive information used in the input training data through membership inference attacks or model inversion attacks both in the centralized setting \cite{shokri2017membership, fredrikson2015model} and federated setting \cite{hitaj2017deep}. Since we assume the central server and participants are honest-but-curious, hence a formal differential privacy guarantee is desired.  \methodName~tackles the privacy issue with a well-designed data-sharing strategy, combined with the rigorous zero-concentrated differential privacy (zCDP) technique \cite{dwork2016concentrated, yu2019differentially} which is a strictly stronger definition
than $(\epsilon, \delta)$-differential privacy considered as the dominant standard for strong privacy protection \cite{dwork2010boosting, dwork2014algorithmic, dwork2016concentrated}. We briefly summarize our contributions as:

\textbf{1) Efficiency.} \methodName~achieves higher accuracy and faster convergence rate than the state-of-the-art federated learning method. It also beats the federated learning method in achieving lower communication cost thanks to the elimination of auxiliary parameters (e.g., in the ADMM approach) and allows each local site to perform most of the computation.

\textbf{2) Utility.} \methodName~supports phenotype discovery even with a rigorous privacy guarantee. By incorporating a $l_{2,1}$ regularization term, \methodName~can jointly decompose local tensors with different distribution patterns and discover both the globally shared and the distinct, site-specific phenotypes.

\textbf{3) Privacy.} \methodName~is a privacy-preserving collaborative tensor factorization framework. By applying zCDP mechanisms, it guarantees that there is no inadvertent patient information leakage in the process of intermediary results exchange with high probability which is quantified by privacy parameters. 

We evaluate \methodName~on two publicly-available large EHR datasets and a synthetic dataset. The performance of \methodName~is assessed from the following three aspects including efficiency measured by accuracy and communication cost, utility measured by phenotype discovery ability and the evaluation on the effect of privacy.

\section{Preliminaries and Notations}
This section describes the preliminaries used in this paper, including tensor factorization,  $(\epsilon, \delta)$-differential privacy, and zCDP.

\begin{table}
\setlength{\abovecaptionskip}{-0.25cm}
\setlength{\belowcaptionskip}{0cm}
  \centering
  \begin{tabular}{c l}
  \toprule
    \textbf{Symbols}&\textbf{Descriptions}\\
    \midrule
    $\otimes$ & Kronecker product\\
    $\odot$ & Khatri-Rao product\\
    $\circ$ & Outer Product\\
    $\ast$ & Element-wise Product\\
    $N$ & Number of modes\\
    $T$ & Number of local sites\\
    $R$ & Number of ranks\\
    $X_{(n)}$ & $n$-mode matricization of tensor $\mathcal{O}$\\
    $\mathcal{X}, \textbf{X}, \textbf{x}$ & Tensor, matrix, vector\\
    $\widehat{\textbf{B}}, \widehat{\textbf{C}}$ & Global factor matrices \\
    $\textbf{A}^{[t]}, \textbf{B}^{[t]}, \textbf{C}^{[t]}$ & Local factor matrices at the $t$-th site\\
    $\mathcal{X}^{[t]}$ & Local tensor at the $t$-th site\\
    $\textbf{x}_{i:}, \textbf{x}_{:r}$ & Row vector, Column vector\\
    \bottomrule
\end{tabular}
\caption{Symbols and Notations}
\label{tab:freq}
\end{table}

\subsection{Tensor Factorization}

\begin{definition}
(\textit{Khatri-Rao product}). Khatri-Rao product is the ``columnwise" Kronecker product of two matrices $\mathbf{A}\in \mathbb{R}^{I\times R}$ and $\mathbf{B}\in \mathbb{R}^{J\times R}$. The result is a matrix of size $(IJ\times R)$ and defined by
$$
\mathbf{A}\odot \mathbf{B}=\left[\mathbf{a}_1\otimes \mathbf{b}_1\cdots \mathbf{a}_R\otimes \mathbf{b}_R\right]
$$
Here, $\otimes$ denotes the \textit{Kronecker product}. The \textit{Kronecker product} of two vectors $\mathbf{a}\in \mathbb{R}^{I}$, $\mathbf{b}\in \mathbb{R}^{J}$ is 
$$\mathbf{a}\otimes \mathbf{b}=\left[\begin{array}{c}
a_1 \mathbf{b}\\
\vdots\\
a_I \mathbf{b}
\end{array}\right]$$
\end{definition} 

\begin{definition}
(\textit{CANDECOMP-PARAFAC Decomposition}). 
The CANDECOMP-PARAFAC (CP) decomposition is to approximate the original tensor $\mathcal{O}$ by the sum of $R$ rank-one tensors. $R$ is the rank of tensor $\mathcal{O}$, It can be expressed as
\begin{equation}
\mathcal{O} \approx \mathcal{X}=\sum\limits_{r=1}^{R}\textbf{a}_{:r}^{(1)}\circ \cdots \circ \textbf{a}_{:r}^{(N)},\label{eq:cp}
\end{equation}
where $\textbf{a}_{:r}^{(n)}$ represents the $r^{th}$ column of $A^{(n)}$ for $n=1,\cdots,N$ and $r=1,\cdots,R$. $A^{(n)}$ is the $n$-mode factor matrix consisting of $R$ columns representing $R$ latent components which can be represented as
\begin{equation*}
A^{(n)}=\left[ \textbf{a}_{:1}^{(n)}\cdots \textbf{a}_{:R}^{(n)} \right],
\end{equation*}
so that $A^{(n)}$ is of size $I_n \times R$ for $n=1,\cdots ,N$, and the equation of \eqref{eq:cp} can also be represented as
\begin{equation}
[\![A^{(1)},\cdots,A^{(N)}]\!]=\sum\limits_{r=1}^{R}\textbf{a}_{:r}^{(1)}\circ \cdots \circ \textbf{a}_{:r}^{(N)}.
\end{equation}
Note that in this formulation, the scalar weights for each rank-one tensor are assumed to be absorbed into the factors.

In the way of a three-mode tensor $\mathcal{O}\in \mathbb{R}^{I\times J\times K}$, the CP decomposition can be represented as
\begin{equation}
\mathcal{O} \approx \mathcal{X}=\sum\limits_{r=1}^{R}\textbf{a}_{:r}\circ \textbf{b}_{:r}\circ \textbf{c}_{:r},
\end{equation}
where $\textbf{a}_{:r}\in \mathbb{R}^I$, $\textbf{b}_{:r}\in \mathbb{R}^J$, $\textbf{c}_{:r}\in \mathbb{R}^K$ are the $r$-th column vectors within the three factor matrices $\textbf{A}\in \mathbb{R}^{I\times R}$, $\textbf{B}\in \mathbb{R}^{J\times R}$, $\textbf{C}\in \mathbb{R}^{K\times R}$.
\end{definition}

\subsection{Differential Privacy}

Differential privacy \cite{ dwork2014algorithmic, dwork2016concentrated} has been demonstrated as a strong standard to provide privacy guarantees for algorithms on aggregate database analysis, which in our case is a collaborative tensor factorization algorithm analyzing distributed tensors with differential privacy.  

\begin{definition}
(\textit{($\epsilon$-$\delta$)-Differential Privacy}) \cite{dwork2014algorithmic}. Let $\mathcal{D}$ and $\mathcal{D}'$ be two neighboring datasets that differ in at most one entry. A randomized algorithm $\mathcal{A}$ is ($\epsilon$-$\delta$)-differentially private if for all $\mathcal{S}\subseteq$ Range$(\mathcal{A})$:
$$
Pr\left[\mathcal{A}(\mathcal{D})\in \mathcal{S}\right] \leq e^{\epsilon} Pr\left[\mathcal{A}(\mathcal{D'})\in \mathcal{S}\right]+\delta,
$$
where $\mathcal{A}(\mathcal{D})$ represents the output of $\mathcal{A}$ with an input of $\mathcal{D}$. 
\end{definition}

The above definition suggests that with a small $\epsilon$, an adversary almost cannot distinguish the outputs of an algorithm with two neighboring datasets $\mathcal{D}$ and $\mathcal{D}'$ as its inputs. While $\delta$ allows a small probability of failing to provide this guarantee. Differential privacy is defined using a pair of neighboring databases which in our work are two tensors and differ in only one entry. 

\begin{definition}
(\textit{$L_2$-sensitivity}) \cite{dwork2014algorithmic}. For two neighboring datasets $\mathcal{D}$ and $\mathcal{D}'$ differing in at most one entry, the $L_2$-sensitivity of an algorithm $\mathcal{A}$ is the maximum change in the $l_2$-norm of the output value of algorithm $\mathcal{A}$ regarding the two neighboring datasets:
$$
\Delta_2(\mathcal{A}) = \sup_{\mathcal{D},\mathcal{D'}}{\lVert \mathcal{A}(\mathcal{D})-\mathcal{A}(\mathcal{D'}) \rVert}_2.
$$
\end{definition}

\begin{theorem} \label{theorem:gaussian}
(\textit{(Gaussian Mechanism)}) \cite{dwork2014algorithmic}. Let $\epsilon\in(0,1)$ be arbitrary. For $c^2 > 2\ln(1.25/\delta)$, the Gaussian Mechanism with parameter $\sigma \geq c\Delta_2(\mathcal{A})/\epsilon$,  adding noise scaled to $\mathcal{N}(0, \sigma^2)$ to each component of the output of algorithm $\mathcal{A}$, is $(\epsilon$-$\delta)$-differentially private.
\end{theorem}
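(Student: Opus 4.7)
The plan is the standard privacy-loss-random-variable argument. Fix two neighboring databases $\mathcal{D}, \mathcal{D}'$ and let $v = \mathcal{A}(\mathcal{D}) - \mathcal{A}(\mathcal{D}')$, so that $\Delta := \|v\|_2 \leq \Delta_2(\mathcal{A})$. The two output distributions of the mechanism are $\mathcal{A}(\mathcal{D}) + \mathcal{N}(0,\sigma^2 I)$ and $\mathcal{A}(\mathcal{D}') + \mathcal{N}(0,\sigma^2 I)$. First I would exploit the rotational invariance of the isotropic Gaussian to rotate coordinates so that $v = (\Delta, 0,\ldots,0)$. Under this rotation the noise law is unchanged and the density ratio of the two output laws depends only on the first coordinate, so the problem collapses to a one-dimensional Gaussian location problem.

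Next I would invoke the standard reduction that $(\epsilon,\delta)$-differential privacy follows once the privacy loss
\begin{equation*}
L(x) \;:=\; \ln \frac{p_{\mathcal{A}(\mathcal{D})}(x)}{p_{\mathcal{A}(\mathcal{D}')}(x)}
\end{equation*}
is bounded by $\epsilon$ except on an event of probability at most $\delta$ under $x \sim \mathcal{A}(\mathcal{D})$. Writing $x = \mathcal{A}(\mathcal{D}) + Z$ with $Z_1 \sim \mathcal{N}(0,\sigma^2)$ in the reduced coordinate and cancelling the common quadratic $\|Z\|_2^2$ term, the privacy loss simplifies to the affine-in-Gaussian expression
\begin{equation*}
L \;=\; \frac{2 Z_1 \Delta + \Delta^2}{2\sigma^2}.
\end{equation*}
Rearranging, the bad event $\{L > \epsilon\}$ is equivalent to the one-sided tail event $\{Z_1/\sigma > c - \epsilon/(2c)\}$ once I substitute the worst-case $\sigma = c\Delta/\epsilon$; any larger $\sigma$ only pushes the threshold further out and shrinks the tail, so this is the extremal case.

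Finally I would apply the Mills ratio tail bound $\Pr[N > t] \leq \frac{1}{t\sqrt{2\pi}}\,e^{-t^2/2}$ for a standard normal $N$ at $t = c - \epsilon/(2c)$, expanding $t^2 = c^2 - \epsilon + \epsilon^2/(4c^2)$ inside the exponent. The hypothesis $\epsilon \in (0,1)$ then controls the $e^{\epsilon/2}$ factor, and the hypothesis $c^2 > 2\ln(1.25/\delta)$ is precisely calibrated so that the resulting bound drops below $\delta$. The main obstacle is shaving the constant down to the advertised $1.25$: a naive Mills bound leaves a larger multiplicative prefactor, and recovering the sharper value requires a careful case split between the regime where $|Z_1|$ lies in a moderate range (bounded directly from the density) and the regime where it sits deep in the tail (bounded by Mills), exactly as in the appendix proof of Dwork--Roth. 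Once this tail estimate is in hand, the symmetric direction (obtained by swapping $\mathcal{D}$ and $\mathcal{D}'$, i.e.\ replacing $v$ by $-v$) completes the argument and yields $(\epsilon,\delta)$-differential privacy.
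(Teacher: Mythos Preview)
Your proof plan is the standard Dwork--Roth argument and is correct, but there is nothing to compare it against: the paper does not prove this theorem. Theorem~\ref{theorem:gaussian} appears in the preliminaries section as a cited result from \cite{dwork2014algorithmic} and is stated without proof; the authors simply import it as background. Your proposal essentially reconstructs the appendix proof from that reference, so in a sense you have matched the \emph{cited} source rather than the paper itself.
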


\subsection{Concentrated Differential Privacy}

Concentrated differential privacy (CDP) is introduced by Dwork and Rothblum \cite{dwork2016concentrated} as a generalization of differential privacy which provides sharper analysis of many privacy-preserving computations. Bun and Steinke \cite{bun2016concentrated} propose an alternative formulation of CDP called "zero-concentrated differential privacy" (zCDP) which utilizes the R\'enyi divergence between probability distributions to measure the requirement of the privacy loss random variable to be sub-gaussian and provides tighter privacy analysis.

\begin{definition} (\textit{Zero-Concentrated Differential Privacy (zCDP) \cite{bun2016concentrated}}) A randomized mechanism $\mathcal{A}$ is $\rho$-zero concentrated differentially private if for any two neighboring databases $\mathcal{D}$ and $\mathcal{D}'$ differing in at most one entry and all $\alpha \in(1, \infty)$, 

$$
D_{\alpha}\left(\mathcal{A}(\mathcal{D}) \| \mathcal{A}\left(\mathcal{D}^{\prime}\right)\right) \triangleq \frac{1}{\alpha-1} \log \left(\mathbb{E}\left[e^{(\alpha-1) L^{(o)}}\right]\right) \leq \rho \alpha,
$$
where $D_{\alpha}\left(\mathcal{A}(\mathcal{D}) \| \mathcal{A}\left(\mathcal{D}^{\prime}\right)\right)$ is called $\alpha$-R\'enyi divergence between the distributions of $\mathcal{A}(\mathcal{D})$ and $\mathcal{A}\left(\mathcal{D}^{\prime}\right)$, and $L(o)$ is the privacy loss random variable which is defined as:

$$
L_{\left(\mathcal{A}(\mathcal{D})| | \mathcal{A}\left(\mathcal{D}^{\prime}\right)\right)}^{(o)} \triangleq \log \frac{\operatorname{Pr}(\mathcal{A}(\mathcal{D})=o)}{\operatorname{Pr}\left(\mathcal{A}\left(\mathcal{D}^{\prime}\right)=o\right)}.
$$

\end{definition}

The following propositions of zCDP will be used in this paper.

\begin{proposition} \cite{bun2016concentrated} The Gaussian mechanism with noise $\mathcal{N}(0, \sigma^2)$ where $\sigma=\sqrt{1 /(2 \rho)} \Delta_{2}$ satisfies $\rho$-zCDP.
\end{proposition}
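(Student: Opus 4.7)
The plan is to bound the $\alpha$-R\'enyi divergence between the Gaussian mechanism's output distributions on two neighbouring datasets and show that this bound is exactly $\rho\alpha$ when $\sigma=\sqrt{1/(2\rho)}\,\Delta_2$. Let $f$ be the deterministic query underlying the mechanism $\mathcal{A}$, so that $\mathcal{A}(\mathcal{D})=f(\mathcal{D})+Z$ with $Z\sim\mathcal{N}(0,\sigma^2 I)$. For neighbouring $\mathcal{D},\mathcal{D}'$, the outputs are distributed as two spherical Gaussians with the same covariance $\sigma^2 I$ but different means $\mu=f(\mathcal{D})$ and $\mu'=f(\mathcal{D}')$, and by the definition of $L_2$-sensitivity we have $\|\mu-\mu'\|_2\le \Delta_2$.

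The first step is to derive a closed-form expression for the R\'enyi divergence of two Gaussians sharing the same covariance. Using the definition $D_\alpha(P\|Q)=\tfrac{1}{\alpha-1}\log\mathbb{E}_{x\sim P}[(P(x)/Q(x))^{\alpha-1}]$ and completing the square in the exponent of the product of Gaussian densities, one obtains
$$
D_\alpha\bigl(\mathcal{N}(\mu,\sigma^2 I)\,\|\,\mathcal{N}(\mu',\sigma^2 I)\bigr)=\frac{\alpha\,\|\mu-\mu'\|_2^2}{2\sigma^2}.
$$
This is a standard Gaussian integral computation and is the only non-trivial algebraic step; I would do it carefully in a short lemma or simply invoke it as a known identity from the Bun--Steinke paper.

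Next, I would plug in the sensitivity bound and the prescribed noise scale. Since $\|\mu-\mu'\|_2\le \Delta_2$, the displayed identity immediately yields
$$
D_\alpha\bigl(\mathcal{A}(\mathcal{D})\,\|\,\mathcal{A}(\mathcal{D}')\bigr)\le \frac{\alpha\,\Delta_2^2}{2\sigma^2}.
$$
Substituting $\sigma^2=\Delta_2^2/(2\rho)$ gives the right-hand side equal to $\rho\alpha$, which is precisely the zCDP condition for all $\alpha\in(1,\infty)$.

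The main obstacle, if one wants a fully self-contained proof, is the Gaussian R\'enyi-divergence computation, since it involves manipulating quadratic forms inside an exponential and verifying that the resulting normalisation constants cancel correctly; everything else is a one-line substitution. In practice I would either state the Gaussian divergence formula as a well-known fact (as in Bun and Steinke) and cite it, or include a short appendix lemma in which the exponents are expanded and integrated against the density of $\mathcal{N}(\mu,\sigma^2 I)$.
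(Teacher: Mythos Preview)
Your argument is correct and is exactly the standard proof: compute the R\'enyi divergence between two spherical Gaussians with equal covariance, bound the mean shift by $\Delta_2$, and substitute $\sigma^2=\Delta_2^2/(2\rho)$. Note, however, that the paper does not actually prove this proposition; it is stated as a cited result from Bun and Steinke \cite{bun2016concentrated} with no accompanying argument, so there is no in-paper proof to compare against---your sketch simply reproduces the proof from the cited reference.
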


\begin{proposition} \cite{bun2016concentrated} If a randomized mechanism $\mathcal{A}$ is $\rho$-CDP, then $\mathcal{A}$ is ($\epsilon^{\prime}$, $\delta$)-DP for any $\delta$ with $\epsilon^{\prime}$ = $\rho+\sqrt{4 \rho \log (1 / \delta)}$; For $\mathcal{A}$ to satisfy $(\epsilon,\delta)$-DP, it suffices to satisfy $\rho$-zCDP by setting $\rho \approx \frac{\epsilon^{2}}{4 \log (1 / \delta)}$.
\end{proposition}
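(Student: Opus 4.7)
The plan is to derive the $(\epsilon',\delta)$-DP guarantee from the zCDP Rényi-divergence bound via a moment generating function / Markov argument, and then invert the resulting relationship to solve for $\rho$. First, I would unpack the zCDP definition: for all $\alpha>1$ and all neighboring $\mathcal{D},\mathcal{D}'$, we have
\[
\log\mathbb{E}\!\left[e^{(\alpha-1)L}\right] \;\le\; \rho\alpha(\alpha-1),
\]
where $L$ is the privacy loss random variable of $\mathcal{A}(\mathcal{D})$ vs.\ $\mathcal{A}(\mathcal{D}')$. This is a sub-Gaussian-style bound on the MGF of $L$ that I will turn into a tail bound.

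Next, I would apply Markov's inequality to $e^{(\alpha-1)L}$: for any threshold $\epsilon'>0$,
\[
\Pr[L>\epsilon'] \;=\; \Pr\!\left[e^{(\alpha-1)L} > e^{(\alpha-1)\epsilon'}\right] \;\le\; e^{\rho\alpha(\alpha-1)-(\alpha-1)\epsilon'}.
\]
I would then optimize the exponent $(\alpha-1)(\rho\alpha-\epsilon')$ over $\alpha>1$. Setting the derivative to zero gives $\alpha^{\star}=\tfrac{1}{2}+\tfrac{\epsilon'}{2\rho}$ (which exceeds $1$ whenever $\epsilon'>\rho$), and substitution collapses the bound to the clean Gaussian-tail form
\[
\Pr[L>\epsilon'] \;\le\; \exp\!\left(-\frac{(\epsilon'-\rho)^{2}}{4\rho}\right).
\]
Equating the right-hand side to $\delta$ and solving for $\epsilon'$ yields exactly $\epsilon' = \rho+\sqrt{4\rho\log(1/\delta)}$, which is the first claim.

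To pass from a tail bound on $L$ to the $(\epsilon',\delta)$-DP inequality, I would invoke the standard fact that if $\Pr[L>\epsilon']\le\delta$ then for every measurable $S\subseteq\mathrm{Range}(\mathcal{A})$,
\[
\Pr[\mathcal{A}(\mathcal{D})\in S] \;\le\; e^{\epsilon'}\Pr[\mathcal{A}(\mathcal{D}')\in S] + \delta,
\]
obtained by splitting $S$ into the ``high privacy-loss'' event $\{L>\epsilon'\}$ (contributing at most $\delta$) and its complement (contributing at most $e^{\epsilon'}\Pr[\mathcal{A}(\mathcal{D}')\in S]$). Finally, for the second clause I would invert $\epsilon=\rho+\sqrt{4\rho\log(1/\delta)}$: in the regime where $\rho\ll\sqrt{4\rho\log(1/\delta)}$ the linear term is negligible, so $\epsilon^{2}\approx 4\rho\log(1/\delta)$, giving $\rho\approx\epsilon^{2}/(4\log(1/\delta))$.

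The main obstacle is the optimization step over $\alpha$: one has to verify that the minimizer $\alpha^{\star}$ falls in the admissible range $(1,\infty)$ (equivalently $\epsilon'>\rho$) and that the resulting closed form is sharp enough to yield the advertised constant $4$ in $\sqrt{4\rho\log(1/\delta)}$. Everything else—applying Markov, splitting $S$ by the event $\{L>\epsilon'\}$, and algebraically inverting to obtain $\rho\approx\epsilon^{2}/(4\log(1/\delta))$—is routine given the Rényi-divergence bound supplied by the zCDP definition.
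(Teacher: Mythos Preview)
Your argument is correct and is essentially the standard derivation from Bun and Steinke: bound the MGF of the privacy loss via the zCDP R\'enyi constraint, apply Markov, optimize over $\alpha$, and convert the resulting tail bound $\Pr[L>\epsilon']\le\delta$ into the $(\epsilon',\delta)$-DP inequality by splitting on the event $\{L>\epsilon'\}$. The optimization and algebra you give are accurate, including the minimizer $\alpha^\star=\tfrac12+\tfrac{\epsilon'}{2\rho}$ and the resulting tail $\exp\!\big(-(\epsilon'-\rho)^2/(4\rho)\big)$.

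Note, however, that the paper does \emph{not} supply its own proof of this proposition: it is stated with a citation to \cite{bun2016concentrated} and used as a black box (in particular, in the proof of Theorem~4.1). So there is no ``paper's proof'' to compare against; your write-up simply fills in the argument that the cited reference contains, and it does so faithfully.
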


\begin{proposition}
(\textit{(Serial composition \cite{bun2016concentrated})})
Let $\mathcal{A} : \mathcal{D}^{n} \rightarrow \mathcal{Y}$ and $\mathcal{A}^{\prime} : \mathcal{D}^{n} \rightarrow \mathcal{Z}$ be randomized algorithms. Suppose $\mathcal{A}$ is $\rho$-zCDP and $\mathcal{A}^{\prime}$ is $\rho^{\prime}$-zCDP. Define $\mathcal{A}^{\prime \prime} : \mathcal{D}^{n} \rightarrow \mathcal{Y} \times \mathcal{Z}$ by $\mathcal{A}^{\prime \prime}=\left(\mathcal{A}, \mathcal{A}^{\prime}\right)$. Then $\mathcal{A}^{\prime \prime}$ is $(\rho+\rho^{\prime})$-zCDP.
\end{proposition}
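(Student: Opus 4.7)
The plan is to unpack the definition of $\rho$-zCDP in terms of $\alpha$-R\'enyi divergence, and then exploit the fact that R\'enyi divergence is additive across independent components. Since the coin flips driving $\mathcal{A}$ and $\mathcal{A}'$ are independent, the joint output distribution of $\mathcal{A}''(\mathcal{D}) = (\mathcal{A}(\mathcal{D}), \mathcal{A}'(\mathcal{D}))$ is a product measure on $\mathcal{Y} \times \mathcal{Z}$, and similarly for $\mathcal{A}''(\mathcal{D}')$. So fix two neighboring databases $\mathcal{D}, \mathcal{D}'$ and any $\alpha \in (1,\infty)$, and aim to prove
$$D_{\alpha}(\mathcal{A}''(\mathcal{D}) \,\|\, \mathcal{A}''(\mathcal{D}')) \le (\rho + \rho')\alpha.$$

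First I would establish the product-additivity lemma: for distributions $P_1, Q_1$ on $\mathcal{Y}$ and $P_2, Q_2$ on $\mathcal{Z}$,
$$D_\alpha(P_1 \times P_2 \,\|\, Q_1 \times Q_2) = D_\alpha(P_1 \,\|\, Q_1) + D_\alpha(P_2 \,\|\, Q_2).$$
This follows directly from the definition $D_\alpha(P\|Q) = \tfrac{1}{\alpha-1}\log \mathbb{E}_{o\sim Q}[(P(o)/Q(o))^\alpha]$: the joint density ratio factors as a product, so the expectation over the product measure factors as a product of expectations by Fubini, and taking logarithms turns the product into a sum.

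Next I would instantiate this lemma with $P_1 = \mathcal{A}(\mathcal{D})$, $Q_1 = \mathcal{A}(\mathcal{D}')$, $P_2 = \mathcal{A}'(\mathcal{D})$, $Q_2 = \mathcal{A}'(\mathcal{D}')$, which is justified by the independence of the internal randomness of $\mathcal{A}$ and $\mathcal{A}'$. This yields
$$D_{\alpha}(\mathcal{A}''(\mathcal{D}) \,\|\, \mathcal{A}''(\mathcal{D}')) = D_\alpha(\mathcal{A}(\mathcal{D})\|\mathcal{A}(\mathcal{D}')) + D_\alpha(\mathcal{A}'(\mathcal{D})\|\mathcal{A}'(\mathcal{D}')).$$
Applying the $\rho$-zCDP hypothesis on $\mathcal{A}$ and the $\rho'$-zCDP hypothesis on $\mathcal{A}'$ to the two terms separately bounds the right-hand side by $\rho\alpha + \rho'\alpha = (\rho+\rho')\alpha$. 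Since $\mathcal{D}, \mathcal{D}'$ and $\alpha$ were arbitrary, this is exactly the $(\rho+\rho')$-zCDP condition.

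The main obstacle is the additivity lemma for R\'enyi divergence; everything else is just substitution and adding inequalities. Care is needed with the $\log$-expectation form in the definition (as opposed to working in KL form) to make sure the product-of-expectations step cleanly gives a sum after taking the logarithm, and to handle the $\tfrac{1}{\alpha-1}$ normalization consistently on both sides.
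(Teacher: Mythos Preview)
The paper does not supply its own proof of this proposition; it is stated as a cited result from \cite{bun2016concentrated} and used as a black box. Your argument is correct and is essentially the standard proof: R\'enyi divergence is additive over independent product distributions, so the divergence of the joint output splits as a sum, and applying the two zCDP hypotheses termwise gives the bound $(\rho+\rho')\alpha$. One small remark: the full Bun--Steinke composition lemma also covers the adaptive case where $\mathcal{A}'$ may depend on the output of $\mathcal{A}$, which requires a slightly more careful conditioning argument (a ``quasi-convexity'' or chain-rule step for R\'enyi divergence rather than pure product additivity). The non-adaptive statement quoted here, however, is exactly what your product-measure argument handles, so no gap.
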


\begin{proposition}
(\textit{(Parallel composition \cite{yu2019differentially})})
Suppose that a mechanism $\mathcal{A}$ consists of a sequence of $T$ adaptive mechanisms, $\mathcal{A}_{1}, \dots, \mathcal{A}_{T}$, where each $\mathcal{A}_{t} : \Pi_{j=1}^{i t e r-1} \mathcal{O}_{j} \times \mathcal{D}_{t} \rightarrow \mathcal{O}_{i t e r}$ and $\mathcal{A}_{t}$ satisfies $\rho_{t}$-zCDP. Let $\mathcal{D}_{1}, \ldots, \mathcal{D}_{T}$ be a randomized partition of the input $\mathcal{D}$. The mechanism $\mathcal{A}(\mathcal{D})=\left(\mathcal{A}_{1}\left(\mathcal{D}_{1}\right), \ldots, \mathcal{A}_{T}\left(\mathcal{D}_{T}\right)\right)$ satisfies $\frac{1}{T} \sum_{t=1}^{T} \rho_{t}$-zCDP.
\end{proposition}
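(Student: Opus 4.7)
The plan is to exploit that neighboring databases $\mathcal{D}$ and $\mathcal{D}'$ differ in exactly one record $x^{\ast}$, and that the randomized partition places $x^{\ast}$ in one of the $T$ buckets uniformly at random; the goal is to combine the per-site $\rho_t$-zCDP guarantees by averaging over this random placement.

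First I would fix $\mathcal{D}, \mathcal{D}'$ differing in one entry $x^{\ast}$ and let $Z \in \{1,\dots,T\}$ be the random index of the partition containing $x^{\ast}$, with $\Pr[Z = t] = 1/T$. Conditioning on $\{Z = t\}$, we have $\mathcal{D}_s = \mathcal{D}'_s$ for every $s \neq t$, and an induction on $s$ along the adaptive chain $\mathcal{A}_s : \prod_{j<s}\mathcal{O}_j \times \mathcal{D}_s \to \mathcal{O}_s$ shows that the history fed into each $\mathcal{A}_s$ has identical conditional distributions under the two databases; using independence of the fresh randomness in each sub-mechanism, the only output that carries information about $x^{\ast}$ is $\mathcal{A}_t(\mathcal{D}_t)$. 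The $\rho_t$-zCDP hypothesis then yields $D_\alpha\bigl(\mathcal{A}(\mathcal{D}) \mid Z = t \,\|\, \mathcal{A}(\mathcal{D}') \mid Z = t\bigr) \le \rho_t \alpha$ for every $\alpha \in (1,\infty)$.

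I would then stitch these conditional bounds into an unconditional Rényi divergence by viewing both $\mathcal{A}(\mathcal{D})$ and $\mathcal{A}(\mathcal{D}')$ as mixtures over the shared index $Z$ with common weights $1/T$, and arguing on the level of the joint $(Z, \mathcal{A}(\mathcal{D}))$ versus $(Z, \mathcal{A}(\mathcal{D}'))$ that the divergence averages componentwise; applying the data-processing inequality to drop $Z$ then leaves $D_\alpha(\mathcal{A}(\mathcal{D}) \,\|\, \mathcal{A}(\mathcal{D}')) \le \bigl(\tfrac{1}{T}\sum_{t=1}^T \rho_t\bigr)\alpha$ for all $\alpha > 1$, which by definition is exactly $\frac{1}{T}\sum_t \rho_t$-zCDP.

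The main obstacle is this final averaging step: Rényi divergence of order $\alpha > 1$ is only jointly quasi-convex in general, so a naive convexity argument across the mixture would collapse $\tfrac{1}{T}\sum_t \rho_t$ into the weaker $\max_t \rho_t$. The key to recovering the average is the shared mixing variable $Z$, which is a structural feature of the randomized partition rather than of arbitrary mixtures: one must exhibit a coupling on the joint $(Z, \text{output})$ so that the per-$t$ bounds $\rho_t\alpha$ genuinely average with weights $\Pr[Z=t]=1/T$ rather than max out. Handling this coupling carefully, together with the routine induction on the adaptive chain above and a check that the $1/T$ factor matches the uniformity assumption on the partition, should complete the proof.
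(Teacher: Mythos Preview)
The paper does not supply its own proof of this proposition; it is quoted from \cite{yu2019differentially} and used as a black box in the proof of Theorem~4.1. So there is no in-paper argument to compare your sketch against.

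On the merits, your sketch has a real gap precisely at the step you yourself flag as the obstacle. Passing to the joint $(Z,\text{output})$ with a shared uniform marginal on $Z$ gives the \emph{identity}
\[
e^{(\alpha-1)\,D_\alpha\bigl((Z,\mathcal{A}(\mathcal{D}))\,\|\,(Z,\mathcal{A}(\mathcal{D}'))\bigr)}
=\frac{1}{T}\sum_{t=1}^{T}e^{(\alpha-1)\,D_\alpha\bigl(\mathcal{A}(\mathcal{D})\mid Z{=}t\ \|\ \mathcal{A}(\mathcal{D}')\mid Z{=}t\bigr)},
\]
so after inserting the conditional bounds $\rho_t\alpha$ and applying data processing to drop $Z$, the best conclusion available is
\[
D_\alpha\bigl(\mathcal{A}(\mathcal{D})\,\|\,\mathcal{A}(\mathcal{D}')\bigr)\ \le\ \frac{1}{\alpha-1}\log\Bigl(\frac{1}{T}\sum_{t=1}^{T}e^{(\alpha-1)\rho_t\alpha}\Bigr).
\]
By convexity of $x\mapsto e^{x}$ this log-mean-exp is \emph{at least} $\bigl(\tfrac{1}{T}\sum_t\rho_t\bigr)\alpha$, not at most; the inequality points the wrong way, and no alternative coupling on $(Z,\text{output})$ can do better, because the displayed relation for the joint is already an equality. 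Concretely, with $T=2$, $\rho_1=0$, $\rho_2=2\rho$, your route yields $\tfrac{1}{\alpha-1}\log\tfrac12\bigl(1+e^{2(\alpha-1)\rho\alpha}\bigr)>\rho\alpha$ for every $\alpha>1$, so the averaging step cannot be closed as written. What your argument \emph{does} cleanly deliver is $\max_t\rho_t$-zCDP, the standard parallel-composition bound; and since the paper only invokes this proposition with all $\rho_t$ equal (each site contributes the same $2E\rho_b$ in the proof of Theorem~4.1), the distinction between the average and the max is immaterial for its application.
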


\section{\methodName}
In this section, we first provide a general overview and then present detailed formulation of the optimization problem.
 
\subsection{Overview}
\methodName~is a distributed tensor factorization model that preserves differential privacy.
Our goal is to learn computational phenotypes from horizontally partitioned patient data (e.g., each hospital has its own patient data with the same medical features). Since we assume the central server and participants are honest-but-curious which means they will not deviate from the prescribed protocol but they are curious about others secrets and try to find out as much as possible about them. Therefore the patient data cannot be collected at a centralized location to construct a global tensor $\mathcal{O}$.
Instead, we assume that there are $T$ local sites and a central server that communicates the intermediary results between the local sites.
Each site performs tensor factorization on the local data and shares privacy-preserving intermediary results with the centralized server (Figure \ref{fig:algorithm}).

\begin{figure}[htbp]
\setlength{\abovecaptionskip}{0cm}
\setlength{\belowcaptionskip}{-0.25cm}
\centering
\includegraphics[width=0.99\linewidth, trim={0.5mm 0.5mm 0.5mm 0.5mm},clip]{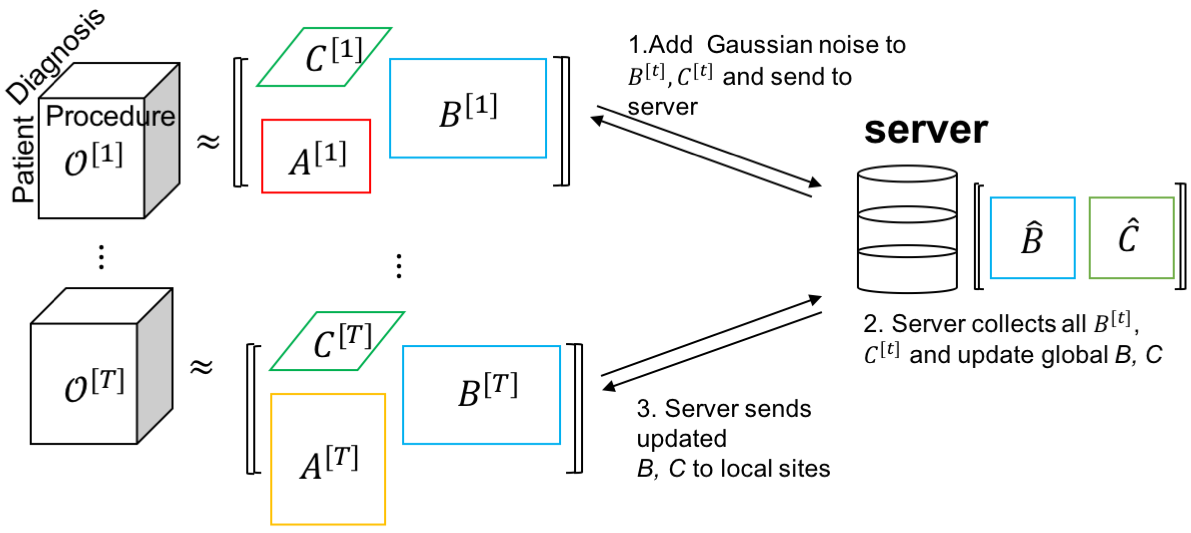}
\caption{Algorithm Overview}
\label{fig:algorithm}
\end{figure}

The patient data at each site is used to construct a local observed tensor,  $\mathcal{O}^{[t]}$.
For simplicity and illustration purposes, we discuss a three-mode tensor situation where the modes are patients, procedures, and diagnoses but \methodName~generalizes to $N$ modes.
The $T$ sites jointly decompose their local tensor into three factor matrices: a patient factor matrix $\textbf{A}^{[t]}$ and two feature factor matrices $\textbf{B}^{[t]}$ and $\textbf{C}^{[t]}$.
We assume that the factor matrices on the non-patient modes (i.e., $\textbf{B}^{[t]}, \textbf{C}^{[t]}$) are the same across the $T$ sites, thus sharing the same computational phenotypes.
To achieve consensus of the shared factor matrices, the non-patient feature factor matrices are shared in a privacy-preserving manner with the central server by adding Gaussian noise to each uploaded factor matrix.

Although the collaborative tensor problem for computational phenotyping has been previously discussed \cite{kim2017federated}, \methodName~provides three important contributions:

\textbf{(1) Efficiency}: We adopt a communication-efficient stochastic gradient descent (SGD) algorithm for collaborative learning which allows each site to transmit less information to the centralized server while still achieving an accurate decomposition. 

\textbf{(2) Heterogeneity}: A traditional global consensus model requires learning the same shared model from multiple sources. However, different data sources may have distinct patterns and properties (e.g., disease prevalence may differ between Georgia and Texas). We propose using the $l_{2,1}$-norm to achieve global consensus among the sites 
while capturing site-specific factors.

\textbf{(3) Differential Privacy Guarantees}: We preserve the privacy of intermediary results by adding Gaussian noise to each non-patient factor matrix prior to sharing with the parameter server. This masks any particular entry in the factor matrices and prevents inadvertent privacy leakage.
A rigorous privacy analysis based on zCDP is performed to ensure strong privacy protection for the patients.

\subsection{Formulation}

Under a single (centralized) model, CP decomposition of the observed tensor $\mathcal{O}$ results in a factorized tensor $\mathcal{X}$ that contains the $R$ most prevalent computational phenotypes.
We represent the centralized tensor as $T$ separate horizontal partitions, $\mathcal{O}^{[1]}, \cdots, \mathcal{O}^{[T]}$.
Thus, the global function can be expressed as the sum of $T$ separable functions with respect to each local factorized tensor $\mathcal{X}^{[t]}$  \cite{kim2017federated}:
\begin{equation}
\min\limits_{\mathcal{X}} \mathcal{L}= \frac{1}{2} || \mathcal{O} - \mathcal{X}||_F^2 = \sum_{t=1}^{T}{\frac{1}{2}} {\left\|{\mathcal{O}}^{[t]} - {\mathcal{X}}^{[t]}\right\|}_F^2. 
\label{eq:opti_deco}
\end{equation}
Since the goal is to uncover computational phenotypes that are shared across all sites, we restrict the sites to factorize the observed local tensors $\mathcal{O}^{[t]}$ such that the non-patient factor matrices are the same. 
Therefore, the global optimization problem is formulated as:
\begin{align*}
\min ~&\sum_{t=1}^{T}{\frac{1}{2}} {\left\|{\mathcal{O}}^{[t]} - {[\![\textbf{A}^{[t]},\textbf{B}^{[t]},\textbf{C}^{[t]}]\!]}\right\|}_F^2 \\
\text{s.t.}~& \textbf{B}^{[1]} = \textbf{B}^{[2]}=\cdots =\textbf{B}^{[T]} \\
& \textbf{C}^{[1]} = \textbf{C}^{[2]}=\cdots =\textbf{C}^{[T]}.
\end{align*}

This can be reformulated as a global consensus optimization, which decomposes the original problem into $T$ local subproblems by introducing two auxiliary variables, $\hat{\mathbf{B}}, \hat{\mathbf{C}}$, to represent the global factor matrices.
A quadratic penalty is placed between the local and global factor matrices to achieve global consensus among the $T$ different sites.
Thus, the local optimization problem at site $t$ is:
\begin{equation}
\begin{aligned}
\min~& \frac{1}{2} {\left\|{\mathcal{O}}^{[t]} - {[\![\textbf{A}^{[t]},\textbf{B}^{[t]},\textbf{C}^{[t]}]\!]}\right\|}_F^2  \\
& + \frac{\gamma}{2}{\left\| \textbf{B}^{[t]} - \hat{\textbf{B}}\right\|}_F^2 + \frac{\gamma}{2}{\left\| \textbf{C}^{[t]} - \hat{\textbf{C}}\right\|}_F^2.
\end{aligned}
\label{eq:opti}
\end{equation}

\subsection{Heterogeneous Patient Populations}

The global consensus model assumes that the patient populations are the same across different sites.
However, this may be too restrictive as some locations can have distinctive patterns.
For example, patients from the cardiac coronary unit may have unique characteristics that are different from the surgical care unit.
\methodName~utilizes the $l_{2,1}$-norm regularization, to allow flexibility for each site to ``turn off" one or more computational phenotypes.
For an arbitrary matrix $\textbf{W} \in \mathbb{R}^{m\times n}$, its $l_{2,1}$-norm is defined as:
\begin{equation}
{\lVert \textbf{W} \rVert}_{2,1}=\sum \limits_{i=1}^{m}  \sqrt{\sum_{j=1}^{n} \textbf{W}_{ij}^2} \label{eq:l21}.
\end{equation}
From the definition, we can see that the $l_{2,1}$-norm controls the row sparsity of matrix $\textbf{W}$.
As a result, the $l_{2,1}$-norm is commonly used in multi-task feature learning to perform feature selection as it can induce structural sparsity  \cite{guo2013probabilistic, liu2009multi, yang2011l2, nie2010efficient}.

\methodName~adopts a multi-task perspective, where each local decomposition is viewed as a separate task.
Under this approach, each site is not required to be characterized by all $R$ computational phenotypes.
To achieve this, we introduce the $l_{2,1}$-norm on the transpose of the patient factor matrices, $\textbf{A}^{[t]}$, to induce sparsity on the columns.
The idea is that if a specific phenotype is barely present in any of the patients (2-norm of the column is close to 0), the regularization will encourage all the column entries to be 0.
This can be used to capture the heterogeneity in the patient populations without violating the global consensus assumption.
Thus the \methodName~optimization problem is: 
\begin{equation}
\begin{aligned}
\min~& \sum_{t=1}^T ( \frac{1}{2} {\left\|{\mathcal{O}}^{[t]} - {[\![\textbf{A}^{[t]},\textbf{B}^{[t]},\textbf{C}^{[t]}]\!]}\right\|}_F^2  + \frac{\gamma}{2}{\left\| \textbf{B}^{[t]} - \hat{\textbf{B}}\right\|}_F^2 \\
&  + \frac{\gamma}{2}{\left\| \textbf{C}^{[t]} - \hat{\textbf{C}}\right\|}_F^2 + \mu{\left\|(\textbf{A}^{[t]})^\top\right\|_{2,1}} ).
\end{aligned}
\label{eq:norm}
\end{equation}
The quadratic penalty, $\gamma$, provides an elastic force to achieve global consensus between the local factor matrices and the global factor matrices whereas the $l_{2,1}$-norm penalty, $\mu$, encourages sites to share similar sparsity patterns.

\section{\methodName~Optimization}

\methodName~adopts the Elastics Averaging SGD (EASGD) \cite{zhang2015deep} approach to solve the optimization problem \eqref{eq:norm}.
EASGD is a communication-efficient algorithm for collaborative learning and has been shown to be more stable than the Alternating Direction Method of Multipliers (ADMM) with regard to parameter selection.
Moreover, SGD-based approaches scale well to sparse tensors, as the computation is bounded by the number of non-zeros. 

Using the EASGD approach, the global consensus optimization problem is solved alternatively between the local sites and the central server.
Each site performs multiple rounds of local tensor decomposition and updates their local factor matrices.
The site then only shares the most updated non-patient mode matrices with output perturbation to prevent revealing of sensitive information.
The patient factor matrix is never shared with the central server to avoid direct leakage of patient membership information.
The server then aggregates the updated local factor matrices to update the global factor matrices and 
sends the new global factor matrices back to each site.
This process is iteratively repeated until there are no changes in the local factor matrices.
The entire \methodName~decomposition process is summarized in Algorithm \ref{algorithm}.

\begin{algorithm}[ht]
\SetAlgoLined
\caption{\methodName}
\label{algorithm}
\KwIn{$\mathcal{O}$, $\tau$ $\eta$, $\gamma$, $\mu$, $\sigma$, $\rho$.}
Randomly initialize the global feature factor matrices $\textbf{B}$, $\textbf{C}$ and local feature  factor matrices $\textbf{B}^{[t]}$, $\textbf{C}^{[t]}$.\\
\While{$\textbf{B}^{[t]}$, $\textbf{C}^{[t]}$ not converge} {%
    \If{$\textbf{Hospital}$} {%
        \For{k = $1, \cdots, \tau$} {
        Shuffle tensor elements; \\
        \For{observation i} {
        Update $\textbf{A}^{[t]}$ using \eqref{eq:ai}; \\
        Update $\textbf{B}^{[t]}$, $\textbf{C}^{[t]}$ using \eqref{eq:bj};\\
        }
        Proximal update for $\prescript{}{new}{\textbf{A}^{[t]}}$ using \eqref{eq:an2}; \\
        }
        Calibrate $Gaussian$ noise matrix $\mathcal{M}_B^{[t]}$ and $\mathcal{M}_C^{[t]}$ as $\mathcal{N}$ $(0,  {\Delta_{2}^{2} /(2 \rho)})$ for each factor matrix; \\
        Update factor matrices $\prescript{}{priv}{\textbf{B}^{[t]}}$ and $\prescript{}{priv}{\textbf{C}^{[t]}}$ using \eqref{eq:priv};\\
        
        Send $\prescript{}{priv}{\textbf{B}^{[t]}}$, $\prescript{}{priv}{\textbf{C}^{[t]}}$ to Server.
            
        }
        \If{$\textbf{Server}$}{%
            Collect $\prescript{}{priv}{\textbf{B}^{[t]}}$, $\prescript{}{priv}{\textbf{C}^{[t]}}$ from each hospital;\\
            Update $\widehat{\textbf{B}}$, $\widehat{\textbf{C}}$ using \eqref{eq: global};\\
            Send $\widehat{\textbf{B}}$, $\widehat{\textbf{C}}$ back to hospitals.
        } 
}
\end{algorithm}

\subsection{Local Factors Update}

Each site updates the local factors by solving the following subproblem:
\begin{equation}
\begin{aligned}
\min~&  \frac{1}{2} {\left\|{\mathcal{O}}^{[t]} - {[\![\textbf{A}^{[t]},\textbf{B}^{[t]},\textbf{C}^{[t]}]\!]}\right\|}_F^2  + \frac{\gamma}{2}{\left\| \textbf{B}^{[t]} - \hat{\textbf{B}}\right\|}_F^2 \\
&  + \frac{\gamma}{2}{\left\| \textbf{C}^{[t]} - \hat{\textbf{C}}\right\|}_F^2 + \mu{\left\|(\textbf{A}^{[t]})^\top\right\|_{2,1}}.
\end{aligned}
\label{eq:local-subproblem}
\end{equation}
EASGD helps reduce the communication cost by allowing sites to perform multiple iterations (each iteration is one pass of the local data) before sending the updated factor matrices.
We further extend the local optimization updates using permutation-based SGD (P-SGD), a practical form of SGD \cite{wu2017bolt}.
In P-SGD, instead of randomly sampling one instance from the tensor at a time, the non-zero elements are first shuffled within the tensor.
The algorithm then cycles through these elements to update the latent factors.
At each local site, the shuffling and cycling process is repeated $\tau$ times, hereby referred to as a $\tau$-pass P-SGD.
There are two benefits of adopting the P-SGD approach: 1) the resulting algorithm is more computationally effective as it eliminates some of the randomness of the basic SGD algorithm.
2) it provides a mechanism to properly estimate the total privacy budget (see Section \ref{sec:priv}).

\subsubsection{Patient Factor Matrix}
\label{sec:pfm}

For site $t$, the patient factor matrix $\textbf{A}^{[t]}$ is updated by minimizing the objective function using the local factorized tensor, $\mathcal{X}^{[t]}$ and the $l_{2,1}$-norm:
\begin{equation}
\min\limits_{\textbf{A}^{[t]}} \underbrace{{\frac{1}{2}}{\left\| \mathcal{O}^{[t]}-[\![\textbf{A}^{[t]}, \textbf{B}^{[t]}, \textbf{C}^{[t]}]\!] \right\|}_F^2}_{\mathcal{F}} + \underbrace{\mu{\left\|(\textbf{A}^{[t]})^\top\right\|_{2,1}}}_{\mathcal{H}} \label{eq:patnorm}.
\end{equation}
While the $l_{2,1}$-norm is desirable from a modeling perspective, it also results in a non-differentiable optimization problem.
The local optimization problem \eqref{eq:patnorm} can be seen as a combination of a differentiable function $\mathcal{F}$ and a non-differentiable function $\mathcal{H}$.
Thus, we propose using the proximal gradient descent method to solve local optimization problem for the patient mode.
Proximal gradient method can be applied in our case since the gradient of the differentiable function $\mathcal{F}$ is $Lipschitz$ continuous with a $Lipschitz$ constant $L$ (see Appendix for details).

Using the proximal gradient method, the factor matrix $\textbf{A}^{[t]}$ is iteratively updated via the proximal operator:
\begin{equation}
\prescript{}{new}{\textbf{A}^{[t]}}=\textbf{prox}_{\eta \mathcal{H}}\left(\textbf{A}^{[t]}-\eta \nabla\mathcal{F}(\textbf{A}^{[t]})\right),
\end{equation}
where $\eta>0$ is the step size at each local iteration.
The proximal operator is computed by solving the following equation:
\begin{equation}
\textbf{prox}_{\eta \mathcal{H}}(\Theta)=\arg\min\limits_{\Theta}\left({\frac{1}{2\eta}}{\lVert \Theta-\hat{\Theta}\rVert}+\mathcal{H}(\Theta)\right),\label{eq:prox1}
\end{equation}
where $\hat{\Theta}=\textbf{A}^{[t]}-\eta \nabla\mathcal{F}(\textbf{A}^{[t]})$ is the updated matrix.
It has been shown that if $\nabla\mathcal{F}$ is $Lipschitz$ continuous with constant $L$, the proximal gradient descent method will converge for step size $\eta<2/L$ \cite{combettes2011proximal}.  
For the $l_{2,1}$-norm, the closed form solution can be computed using the soft-thresholding operator:
\begin{equation}
\textbf{prox}_{\eta \mathcal{H}}(\widehat{{\Theta}})=\widehat{{\Theta}_{r:}}{\left(1-{\frac{\mu}{\lVert \widehat{{\Theta}_{r:}} \rVert}_2}\right)}_+,\label{eq:prox}
\end{equation}
where $r\in (0, R]$ and $r$ represents the $r$-th column of the factor matrix $\widehat{\Theta}$, and $(z)_{+}$ denotes the maximum of 0 and $z$.
Thus, if the norm of the $r$-th column of the patient matrix is small, the proximal operator will ``turn off" that column.

The gradient of the smooth part can be derived with respect to each row in the patient mode factor matrix, $\textbf{A}^{[t]}$.
The update rule for each row is:
\begin{equation}
\textbf{a}_{i:}^{[t]}\gets \textbf{a}_{i:}^{[t]}-\eta\left[\left(\textbf{a}_{i:}^{[t]}{(\textbf{b}_{j:}^{[t]}*\textbf{c}_{k:}^{[t]})}^\top -\mathcal{O}_{ijk}^{[t]}\right)\left(\textbf{b}_{j:}^{[t]}*\textbf{c}_{k:}^{[t]}\right)\right] \label{eq:ai}
\end{equation}

After one pass through all entries in a local tensor to update the patient factor matrix, the second step is to use proximal operator \eqref{eq:prox} to update the patient factor matrix $\textbf{A}^{[t]}$:
\begin{equation}
\begin{aligned}
&\prescript{}{new}{\textbf{A}^{[t]}}=\textbf{prox}_{\eta \mathcal{H}}(\textbf{A}^{[t]}).\\
\end{aligned}
\label{eq:an2}
\end{equation}

\subsubsection{Feature Factor Matrices}

The local feature factor matrices, $\textbf{B}^{[t]}$ and $\textbf{C}^{[t]}$, are updated based on the following objective functions:
\begin{equation}
\begin{aligned}
&\min \limits_{\textbf{B}^{[t]}} f_b =  {\frac{1}{2}}{\left\| \mathcal{O}^{[t]}-[\![\textbf{A}^{[t]}, \textbf{B}^{[t]}, \textbf{C}^{[t]}]\!]\right\|}_F^2+{\frac{\gamma}{2}}{\left\| \textbf{B}^{[t]} - \hat{\textbf{B}}\right\|}_F^2,\\
&\min \limits_{\textbf{C}^{[t]}} f_c =  {\frac{1}{2}}{\left\| \mathcal{O}^{[t]}-[\![\textbf{A}^{[t]}, \textbf{B}^{[t]}, \textbf{C}^{[t]}]\!]\right\|}_F^2+{\frac{\gamma}{2}}{\left\| \textbf{C}^{[t]} - \hat{\textbf{C}}\right\|}_F^2.
\end{aligned}
\label{eq:obj_feature}
\end{equation}
The partial derivatives of $f_b, f_c$ with respect to $\textbf{b}_{j:}^{[t]}$ and $\textbf{c}_{k:}^{[t]}$, the $j$-th and $k$-th row of the $\textbf{B}^{[t]}$ and $\textbf{C}^{[t]}$ factor matrices, respectively, are computed.

\begin{equation}
\begin{aligned}
&{\frac{\partial f_b}{\partial \textbf{b}_{j:}^{[t]}}}=
\left[\left(\textbf{a}_{i:}^{[t]}{(\textbf{b}_{j:}^{[t]}*\textbf{c}_{k:}^{[t]})}^\top -\mathcal{O}_{ijk}^{[t]}\right)\left(\textbf{a}_{i:}^{[t]}*\textbf{c}_{k:}^{[t]}\right)\right]\\
&{\frac{\partial f_c}{\partial \textbf{c}_{k:}^{[t]}}}=
\left[\left(\textbf{a}_{i:}^{[t]}{(\textbf{b}_{j:}^{[t]}*\textbf{c}_{k:}^{[t]})}^\top -\mathcal{O}_{ijk}^{[t]}\right)\left(\textbf{a}_{i:}^{[t]}*\textbf{b}_{j:}^{[t]}\right)\right].
\end{aligned}
\label{eq:feature-update}
\end{equation}
$\textbf{B}^{[t]}$ and $\textbf{C}^{[t]}$ are then updated row by row by adding up the partial derivative of the quadratic penalty term and the partial derivative with respect to $\textbf{b}_{j:}^{[t]}$ and $\textbf{c}_{k:}^{[t]}$ shown in \eqref{eq:feature-update}.

\begin{equation}
\begin{aligned}
&\textbf{b}_{j:}^{[t]}\gets \textbf{b}_{j:}^{[t]}-\eta\left[{\frac{\partial f_n}{\partial \textbf{b}_{j:}^{[t]}}} + \gamma \left( \textbf{b}_{j:}^{[t]} - \widehat{\textbf{b}_{j:}}\right)\right];\\
&\textbf{c}_{k:}^{[t]}\gets \textbf{c}_{k:}^{[t]}-\eta\left[{\frac{\partial f_n}{\partial \textbf{c}_{k:}^{[t]}}} +\gamma \left( \textbf{c}_{k:}^{[t]} - \widehat{\textbf{c}_{k:}}\right)\right].
\end{aligned}
\label{eq:bj}
\end{equation}

Each site simultaneously does several rounds ($\tau$) of the local factor updates.
After $\tau$ rounds are completed, the feature factor matrices will be perturbed with $Gaussian$ noise and sent to central server.

\subsubsection{Privacy-Preserving Output Perturbation}
Although the feature factor matrices do not directly contain patient information, it may inadvertently violate patient privacy (e.g., a rare disease that is only present in a small number of patients). 
To protect the patient information from being speculated by semi-honest server, we perturb the feature mode factor matrices using the Gaussian mechanism, a common building block to perturb the output and achieve rigorous differential privacy guarantee.

The Gaussian mechanism adds zero-mean Gaussian noise with standard deviation $\sigma={\Delta_{2}^{2} /(2 \rho)}$ to each element of the output \cite{bun2016concentrated}.
Thus, the noise matrix $\mathcal{M}$ can be calibrated for each factor matrices $\textbf{B}^{[t]}$ and $\textbf{C}^{[t]}$ based on their $L_2$-sensitivity to construct privacy-preserving feature factor matrices:

\begin{equation}
\begin{aligned}
&\prescript{}{priv}{\textbf{B}^{[t]}} \gets{\textbf{B}^{[t]}}+\mathcal{M}_B^{[t]},\\
&\prescript{}{priv}{\textbf{C}^{[t]}} \gets{\textbf{C}^{[t]}}+\mathcal{M}_C^{[t]},\\
\end{aligned}
\label{eq:priv}
\end{equation}
As a result, each factor matrix that is shared with the central server satisfies $\rho$-zCDP by Proposition 2.7.
A detailed privacy analysis for the overall privacy guarentee is provided in the next subsection.

\subsection{Privacy Analysis}
\label{sec:priv}
In this section we analyze the overall privacy guarantee of Algorithm \ref{algorithm}. 
The analysis is based on the following knowledge of the optimization problem: 1) each local site performs a $\tau$-pass P-SGD update per epoch; 2) for the local objective function $f$ in \eqref{eq:obj_feature}, when fixing two of the factor matrices, the objective function becomes a convex optimization problem for the other factor matrix.

\subsubsection{$L_2$-sensitivity}
The objective function \eqref{eq:obj_feature} satisfies $L-Lipschitz$, with $Lipschitz$ constant $L$ the tight upper bound of the gradient.
For a $\tau$-pass P-SGD, having constant learning rate $\eta=\eta_{k}\leq{\frac{2}{\beta}}$ ($k=1,...,\tau$, $\beta$ is the $Lipschitz$ constant of the gradient of \eqref{eq:obj_feature} regarding $\textbf{B}^{[t]}$ or $\textbf{C}^{[t]}$, see Appendix for $\beta$ calculation), the $L_2$-sensitivity of this optimization problem in \eqref{eq:obj_feature} is calculated as $\Delta_2(f)=2\tau L\eta$ \cite{wu2017bolt}.

\subsubsection{Overall Privacy Guarantee}

The overall privacy guarantee of Algorithm 1 is analyzed under the zCDP definition which provides tighter privacy bound than strong composition theorem \cite{dwork2010boosting} for multiple folds Gaussian mechanism \cite{ bun2016concentrated, yu2019differentially}. The total $\rho$-zCDP will be transferred to $(\epsilon, \delta)$-DP in the end using Proposition 2.8.

\begin{theorem} \label{theorem:overall privacy guarantee}
Algorithm 1 is $(\epsilon, \delta)$-differentially private if we choose 
the input privacy budget for each factor matrix per epoch as
$$
\rho=\frac{\epsilon^{2}}{8 E \log (1 / \delta)}
$$
where $E$ is the number of epochs when the algorithm is converged.
\end{theorem}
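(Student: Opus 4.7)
The plan is to aggregate the per-release zCDP cost and then convert the resulting total into an $(\epsilon,\delta)$-DP guarantee. First I would isolate the quantities actually released to the server: at every epoch each site $t$ sends only the two Gaussian-perturbed feature matrices $\prescript{}{priv}{\textbf{B}^{[t]}}$ and $\prescript{}{priv}{\textbf{C}^{[t]}}$; the patient factor $\textbf{A}^{[t]}$ never leaves the site and so contributes no privacy loss. Using the $L_2$-sensitivity bound $\Delta_2 = 2\tau L\eta$ stated in the preceding subsection for the $\tau$-pass P-SGD output, the Gaussian noise scale $\sigma^{2}=\Delta_2^{2}/(2\rho)$ together with Proposition 2.7 implies that each of these two releases is $\rho$-zCDP in isolation.

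Next I would compose the releases inside a single hospital. By serial composition (Proposition 2.9), the two matrix releases per epoch amount to $2\rho$-zCDP, and chaining across $E$ epochs gives $2E\rho$-zCDP for the full transcript emitted by one hospital. For the cross-site step I would exploit the horizontal partitioning of the cohort: since every patient record lives in exactly one local tensor, any two neighboring datasets $\mathcal{D},\mathcal{D}'$ differ inside a single site $t^{\ast}$, so the outputs from all other sites are identically distributed under the two inputs and the R\'enyi divergence of the joint transcript reduces to that of site $t^{\ast}$. Therefore the overall algorithm inherits the same $2E\rho$-zCDP guarantee, which also agrees with Proposition 2.10 since all per-site budgets are equal.

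Finally I would translate the zCDP bound into $(\epsilon,\delta)$-DP. Applying Proposition 2.8 with total budget $\rho_{\text{tot}}=2E\rho$, it suffices to enforce $\rho_{\text{tot}} = \epsilon^{2}/(4\log(1/\delta))$, which rearranges to $\rho = \epsilon^{2}/(8E\log(1/\delta))$ as stated in the theorem.

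The main obstacle I expect is cleanly justifying the $L_2$-sensitivity $\Delta_2=2\tau L\eta$, because the per-release output is not a single gradient step but the result of $\tau$ full P-SGD passes over the local tensor. The argument needs to fix $\textbf{A}^{[t]}$ and the other feature matrix, exploit the convexity and $L$-Lipschitz continuity of \eqref{eq:obj_feature}, and verify that with constant step size $\eta\le 2/\beta$ the SGD map is a non-expansion, so that the per-update changes caused by a single differing entry accumulate linearly over the $2\tau$ updates to at most $2\tau L\eta$. Once this bound is in hand, the remaining zCDP accounting and the final DP conversion are a mechanical application of Propositions 2.7, 2.9, and 2.8.
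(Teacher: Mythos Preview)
Your proposal is correct and follows essentially the same route as the paper: invoke Proposition~2.7 for each Gaussian release, accumulate to $2E\rho$ via serial composition (Proposition~2.9), argue the cross-site step via parallel composition (Proposition~2.10), and solve for $\rho$ using the zCDP-to-$(\epsilon,\delta)$ conversion in Proposition~2.8. Your first-principles justification of the parallel step (neighboring datasets differ inside a single site) and your remarks on the $\Delta_2=2\tau L\eta$ sensitivity bound are welcome elaborations, but the paper's proof simply cites the relevant propositions without this additional discussion.
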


\begin{proof}
Let the \text{"base"} zCDP parameter be $\rho_{b}$, $\textbf{B}^{[t]}$ and $\textbf{C}^{[t]}$ together cost $2E\rho_{b}$ after $E$ epochs by Proposition 2.9. All $T$ user nodes cost $\frac{1}{T} \sum_{t=1}^{T} 2 E \rho_{b}=2 E \rho_{b}$ by the \textit{parallel composition theorem} in Proposition 2.10. By the connection of zCDP and $(\epsilon, \delta)$-DP in Proposition 2.8, we get $\rho_{b}=\frac{\epsilon^{2}}{8 E \log (1 / \delta)}$, which concludes our proof.
\end{proof}

\subsection{Global Variables Update}
The server receives $T$ local feature matrix updates, and then updates the global feature matrices according to the same objective function in \eqref{eq:opti}.
The gradient for the global feature matrices 
$\widehat{\textbf{B}}$ and $\widehat{\textbf{C}}$ are: 

\begin{equation}
\begin{aligned}
&\widehat{\textbf{B}}\gets \widehat{\textbf{B}}+\eta \sum\limits_{t=1}^{T}\gamma \left(\prescript{}{priv}{\textbf{B}^{[t]}}-\widehat{\textbf{B}}\right) \\
&\widehat{\textbf{C}}\gets \widehat{\textbf{C}}+\eta \sum\limits_{t=1}^{T}\gamma \left(\prescript{}{priv}{\textbf{C}^{[t]}}-\widehat{\textbf{C}}\right).
\end{aligned}
\label{eq: global}
\end{equation}
The update makes the global phenotypes similar to the local phenotypes at the $T$ local sites.
The server then sends the global information, $\widehat{\textbf{B}}, \widehat{\textbf{C}}$
to each site for the next epoch.

\section{Experimental Evaluation}

We evaluate \methodName~on three aspects: 1) efficiency based on accuracy and communication cost; 2) utility of the phenotype discovery; and 3) impact of privacy.
The evaluation is performed on both real-world datasets and synthetic datasets.

\subsection{Dataset}
We evaluated \methodName~on one synthetic dataset and two real-world datasets, MIMIC-III  \cite{johnson2016mimic} and the CMS DE-SynPUF dataset.
Each of the dataset has different sizes, sparsity (i.e., \% of non-zero elements), and skewness in distribution (i.e., some sites have more patients).

\noindent \textbf{MIMIC-III.}
This is a publicly-available intensive care unit (ICU) dataset collected from 2001 to 2012. We construct 6 local tensors with different sizes representing patients from different ICUs. 
Each tensor element represents the number of co-occurrence of diagnoses and procedures from the same patient within a 30-day time window.
For better interpretability, we adopt the rule in \cite{kim2017discriminative} and select 202 procedures ICD-9 codes and 316 diagnoses codes that have the highest frequency.
The resulting tensor is $40,662$ patients $\times$ 202 procedures $\times$ 316 diagnoses with a non-zero ratio of $4.0382\times 10^{-6}$.

\noindent \textbf{CMS.} This is a publicly-available Data Entrepreneurs' Synthetic Public Use File (DE-SynPUF) from 2008 to 2010. We randomly choose 5 samples out of the 20 samples of the outpatient data to construct 5 local tensors with patients, procedures and diagnoses. Different from MIMIC-III, we make each local tensor the same size. There are 82,307 patients with 2,532 procedures and 10,983 diagnoses within a 30-day time window. We apply the same rule in selecting ICD-9 codes. By concatenating the 5 local tensors, we obtain a big tensor with $3.1678\times 10^{-7}$ non-zero ratio.

\noindent \textbf{Synthetic Dataset.} We also construct tensors from synthetic data. In order to test different dimensions and sparsities, we construct a tensor of size $5000\times 300\times 800$ with a sparsity rate of $10^{-5}$ and then horizontally partition it into 5 equal parts.

\subsection{Baselines}

We compare our \methodName~framework with two centralized baseline methods and an existing state-of-the-art federated tensor factorization method as described below.   

\noindent \textbf{CP-ALS}: A widely used, centralized model that solves tensor decomposition using an alternating least squares approach. Data from multiple sources are combined to construct the global tensor.

\noindent \textbf{SGD}: A  centralized method that solves the tensor decomposition use the stochastic gradient descent-based approach.
This is equivalent to \methodName~with a single site and no regularization ($T=1,\gamma=0,\mu=0$).
We consider this a counterpart to the CP-ALS method.

\noindent \textbf{TRIP} \cite{kim2017federated}:
A federated tensor factorization framework that enforces a shared global model and does not offer any differential privacy guarantee. TRIP utilizes the consensus ADMM approach to decompose the problem into local subproblems. 

\begin{figure*}[htbp]
\setlength{\abovecaptionskip}{0cm}
\setlength{\belowcaptionskip}{0cm}
\centering
\subfigure[MIMIC-III: ICU]{
\includegraphics[width=2.24in,trim={0.5mm 0.5mm 0.5mm 0.5mm},clip]{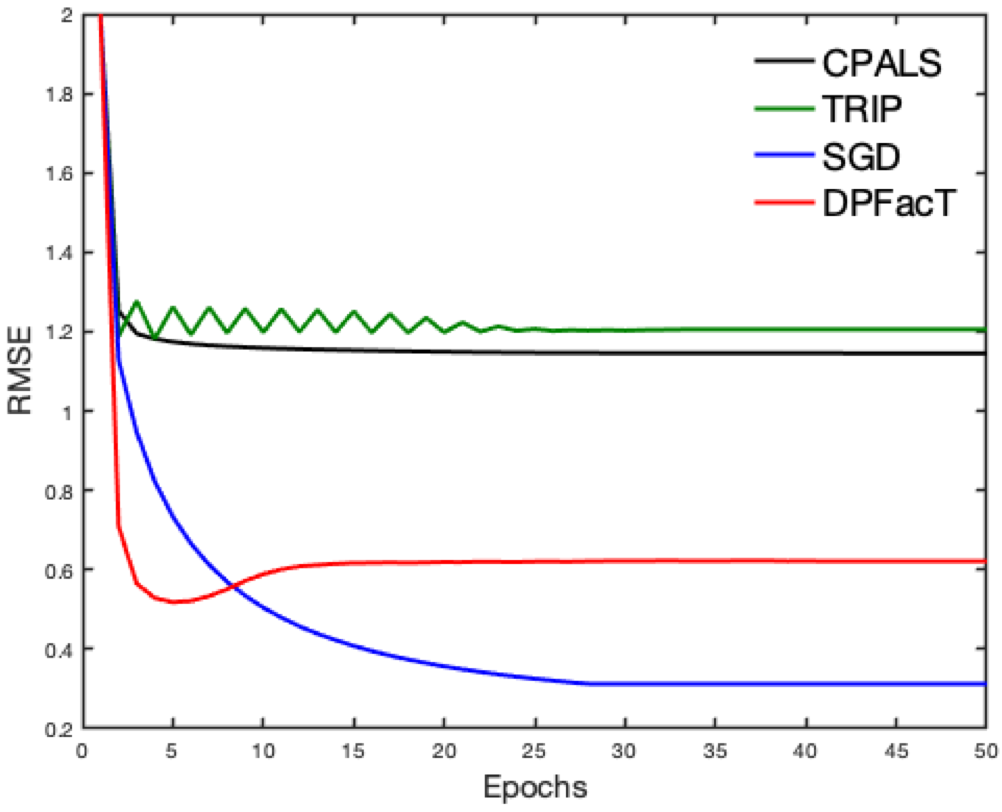}
}
\subfigure[CMS]{
\includegraphics[width=2.256in,trim={0.8mm 0.8mm 0.5mm 0.5mm},clip]{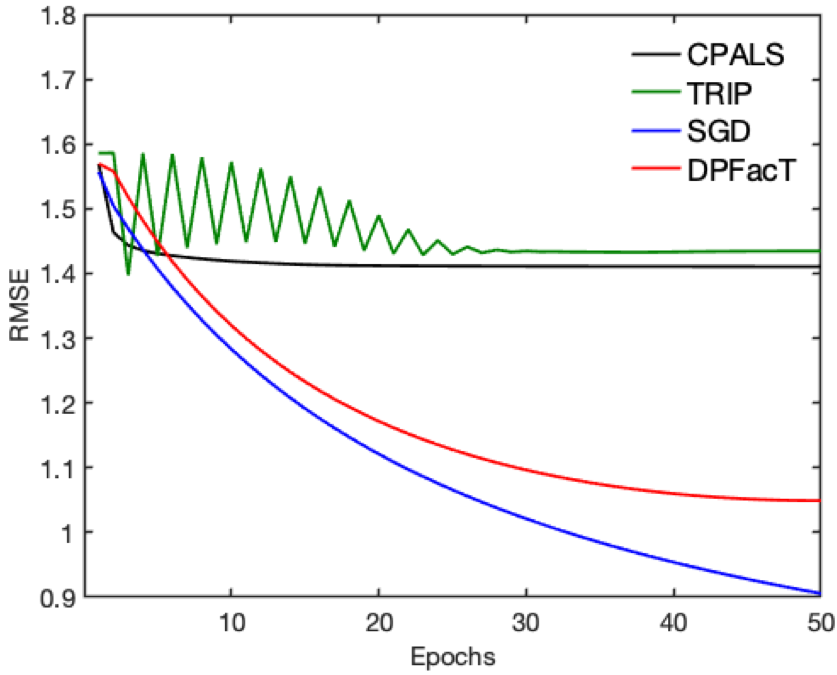}
}
\subfigure[Synthetic]{
\includegraphics[width=2.257in, trim={0.5mm 0 0 0},clip]{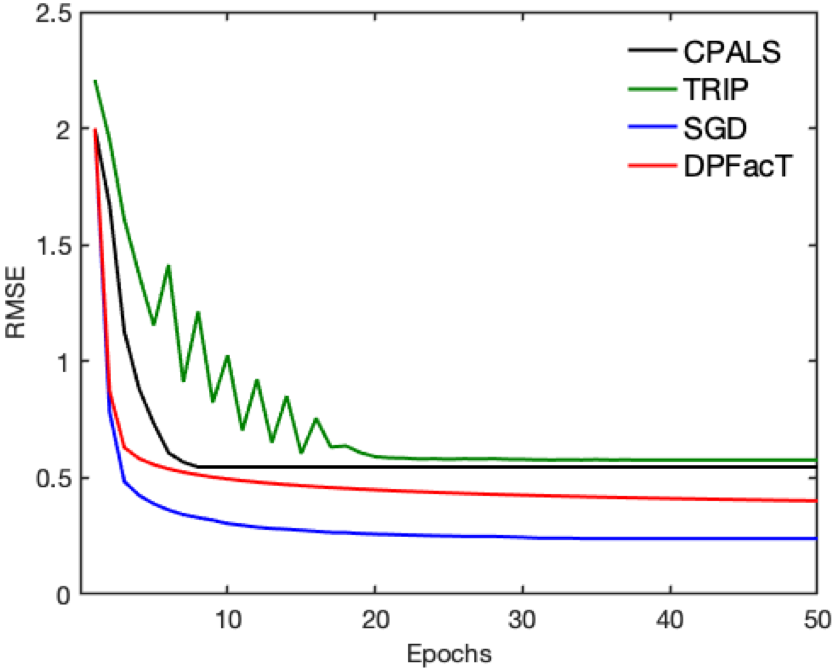}
}
\caption{Average RMSE on (a) MIMIC-III, (b) CMS, (c) Synthetic datasets using 5 random initializations.}
\label{fig:rmse}
\end{figure*}

\subsection{Implementation Details}
\methodName~is implemented in MatlabR2018b with the Tensor Toolbox Version 2.6 \cite{TTB_Software} for tensor computing and the Parallel Computing Toolbox of Matlab. The experiments were conducted on m5.4xlarge instances of AWS EC2 with 8 workers. For prediction task, we build the logistic regression model with Scikit-learn library of Python 2.7. For reproducibility purpose, we made our code publicly available\footnote{https://github.com/jma78/DPFact.}.

\subsection{Parameter Configuration}
Hyper-parameter settings include quadratic penalty parameter $\gamma$, $l_{2,1}$ regularization term $\mu$, learning rate $\eta$, and the input per-epoch, per-factor matrix privacy budget $\rho$. The rank $R$ is set to 50 to allow some site-specific phenotypes to be captured.

\subsubsection{Quadratic penalty parameter $\gamma$} 
The quadratic penalty term can be viewed as an elastic force between the local factor matrices and the global factor matrices. 
Smaller $\gamma$ allows more exploration of the local factors 
but will result in slower convergence. To balance the trade-off between convergence and stability, we choose $\gamma=5$ after grid search through $\gamma=\{2, 5, 8, 10\}$.

\subsubsection{$l_{2,1}$-regularization term $\mu$} 
We evaluate the performance of \methodName~with different $\mu$ for different ICU types as they differ in the $Lipschitz$ constants. Smaller $\mu$ has minimal effect on the column sparsity, as there are no columns that are set to 0, while higher $\mu$ will "turn off" a large portion of the factors and prevent \methodName~from generating useful phenotypes.
Based on figure \ref{fig:icu_norms}, we choose $\mu = \{1, 1.8, 3.2, 1.8, 1.5, 0.6\}$ for TSICU, SICU, MICU, CSRU, CCU, NICU respectively for MIMIC-III to maintain noticeable differences in the column magnitude and the flexibility to have at least one unshared column (see Appendix for details).
Similarly, we choose $\mu=2$ equally for each site for CMS and $\mu=0.5$ equally for each site for the synthetic dataset.

\subsubsection{Learning rate $\eta$} 
The learning rate $\eta$ must be the same for local sites and the parameter server. The optimal $\eta$ was found after grid searching in the range $[10^{-5}$, $10^{-1}]$. We choose $10^{-2}, 10^{-3}$, and $10^{-2}$ for MIMIC-III, CMS, and synthetic data respectively.

\subsubsection{Privacy budget $\rho$}  
We choose the per-epoch privacy budget under the zCDP definition for each factor matrix as $\rho=10^{-3}$ for MIMIC-III, CMS, and synthetic dataset. By Theorem 4.1, the total privacy guarantee is $(1.2, 10^{-4})$, $(1.9, 10^{-4})$, and $(1.7, 10^{-4})$ under the $(\epsilon, \delta)$-DP definition for MIMIC-III, CMS, and synthetic dataset respectively when DPFact converges (we choose $\delta$ to be $10^{-4}$).

\subsubsection{Number of sites $T$}
To gain more knowledge on how communication cost would be reduced regarding the number of sites, we evaluate the communication cost when the number of sites ($T$) are increased.
To simulate a larger number of sites, we randomly partition the global observed tensor into 1, 5, and 10 sites for the three datasets. 
Table \ref{tab:commu2} shows that the communication cost of \methodName~scales proportionally with the number of sites.

\begin{table}
  \setlength{\abovecaptionskip}{0cm}
  \setlength{\belowcaptionskip}{0cm}
  \centering
  \begin{tabular}{l r r r}
  \toprule
    \textbf{\# of Sites} & \textbf{MIMIC-III} & \textbf{CMS} & \textbf{Synthetic}\\
    \midrule
    1 & 18.73 & 22.89 & 1.55\\
    5 & 93.62 & 114.42 & 7.75\\
    10 & 189.83  & 228.83 & 15.50\\
    \bottomrule
\end{tabular}
\caption{Communication cost of \methodName~for different number of sites (Seconds)}
\label{tab:commu2}
\end{table}

\subsection{Efficiency}
\subsubsection{Accuracy}
Accuracy is evaluated using the root mean square error (RMSE) between the global observed tensor and a horizontal concatenation of each factorized local tensor.
Figure \ref{fig:rmse} illustrates the RMSE as a function of the number of epochs. 
We observe that \methodName~converges to a smaller RMSE than CP-ALS and TRIP.
SGD achieves the lowest RMSE as \methodName~ suffers some utility loss by sharing differentially private intermediary results.

\subsubsection{Communication Cost}
The communication cost is measured based on the total number of communicated bytes divided by the data transfer rate (assumed as 15 MB/second).
As CP-ALS and SGD are both centralized models, only TRIP and \methodName~are compared.

Table \ref{table:commu1} summarizes the communication cost on all the datasets.
\methodName~reduces the cost by 46.6\%, 37.7\%, and 20.7\% on MIMIC-III, CMS, and synthetic data,  respectively.
This is achieved by allowing more local exploration at each site (multiple passes of the data) and transmitting fewer auxiliary variables.
Moreover, the reduced communication cost does not result in higher RMSE (see Figure \ref{fig:rmse}).

\begin{table}
  \setlength{\abovecaptionskip}{0cm}
  \setlength{\belowcaptionskip}{0cm}
  \centering
  \begin{tabular}{ l r r r}
  \toprule
    \textbf{Algorithm} & \textbf{MIMIC-III} & \textbf{CMS} & \textbf{Synthetic}\\
    \midrule
    TRIP & 175.26 & 183.72 & 9.77\\
    \methodName & 93.62 & 114.42 & 7.75\\
    \bottomrule
\end{tabular}
\caption{Communication Cost of \methodName~and TRIP (Seconds)}
\label{table:commu1}
\end{table}

\subsection{Utility}
The utility of \methodName~is measured by the predictive power of the discovered phenotypes.
A logistic regression model is fit using the patients' membership values (i.e., $\textbf{A}^{[t]}_{i:}$, $\widehat{\textbf{A}_{i:}}$ of size $1\times R$) as features to predict in-hospital mortality.
We use a 60-40 train-test split and evaluated the model using area under the receiver operating characteristic curve (AUC).

\subsubsection{Global Patterns}
Table \ref{table:auc} shows the AUC for \methodName, CP-ALS (centralized), and TRIP (distributed) as a function of the rank ($R$).
From the results, we observe that \methodName~outperforms both baseline methods for achieving the highest AUC.
This suggests that \methodName~captures similar global phenotypes as the other two methods.
We note that \methodName~has a slightly lower AUC than CP-ALS for a rank of 10, as the $l_{2,1}$-regularization effect is not prominent.

\begin{table}
  \setlength{\abovecaptionskip}{-0.1cm}
  \setlength{\belowcaptionskip}{0cm}
  \centering
  \begin{tabular}{ l c c c c c}
  \toprule
    \multirow{2}*{\textbf{Rank}} & \multirow{2}*{\textbf{CP-ALS}} & \multirow{2}*{\textbf{TRIP}} & \multicolumn{3}{c}{\textbf{\methodName}} \\
    \cmidrule(lr){4-6}
    & & & \textbf{\methodName} & \textbf{w/o $l_{2,1}$} & \textbf{w/o DP}\\
    \midrule
    10 & 0.7516 & 0.7130 & 0.7319 & 0.5189 & 0.7401\\
    20 & 0.7573 & 0.7596 & 0.7751 & 0.6886 & 0.7763\\
    30 & 0.7488 & \underline{0.7644} & 0.7679 & 0.6977 & 0.7705\\
    40 & 0.7603 & 0.7574 & 0.7737 & 0.7137 & 0.7756\\
    50 & 0.7643 & 0.7633 & \underline{0.7759} & 0.7212 & \underline{0.7790}\\
    60 & \underline{0.7648} & 0.7588 & 0.7758 & \underline{0.7312} & 0.7763\\
    \bottomrule
\end{tabular}
\caption{Predictive performance (AUC) comparison for (1) CP-ALS, (2) TRIP, (3) \methodName, (4) \methodName~without $l_{2,1}$-norm (w/o $l_{2,1}$), (5) non-private \methodName ~(w/o DP).}
\label{table:auc}
\end{table}

\subsubsection{Site-Specific Patterns}
Besides achieving the highest predictive performance, \methodName~also can be used to discover site-specific patterns.
As an example, we focus on the neonatal ICU (NICU) which has a drastically different population than the other 5 ICUs.
The ability to capture NICU-specific phenotypes can be seen in the AUC comparison with TRIP (Figure \ref{fig:nicu_auc}(a)).
\methodName~consistently achieves higher AUC for NICU patients.
The importance of the $l_{2,1}$-regularization term is also illustrated in Table \ref{table:auc}.
\methodName~with the $l_{2,1}$-regularization is more stable and achieves higher AUC compared without the regularization term ($\mu=0$).

Table \ref{table:nicu_stats} illustrates the top 5 phenotypes with respect to the magnitude of the logistic regression coefficient (mortality risk related to the phenotype) for NICU. The phenotypes are named according to the non-zero procedures and diagnoses.
A high $\lambda$ and prevalence means this phenotype is common.
From the results, we observe that heart disease, respiration failure, and pneumonia are more common but less associated with mortality risk (negative coefficient).
However, acute kidney injury (AKI) and anemia are less prevalent and highly associated with death.
In particular, AKI has the highest risk of in-hospital death, which is consistent with other reported results \cite{youssef2015incidence}. Table \ref{table:dp}(a) shows an NICU-specific phenotype, which differs slightly from the corresponding global phenotype showing in table \ref{table:dp}(b).

\begin{figure}[htbp]
\setlength{\abovecaptionskip}{0cm}
\setlength{\belowcaptionskip}{-0.25cm}
\centering
\subfigure[AUC comparison]{
\includegraphics[width=1.6in,trim={0.5mm 0.5mm 0.5mm 0.5mm},clip]{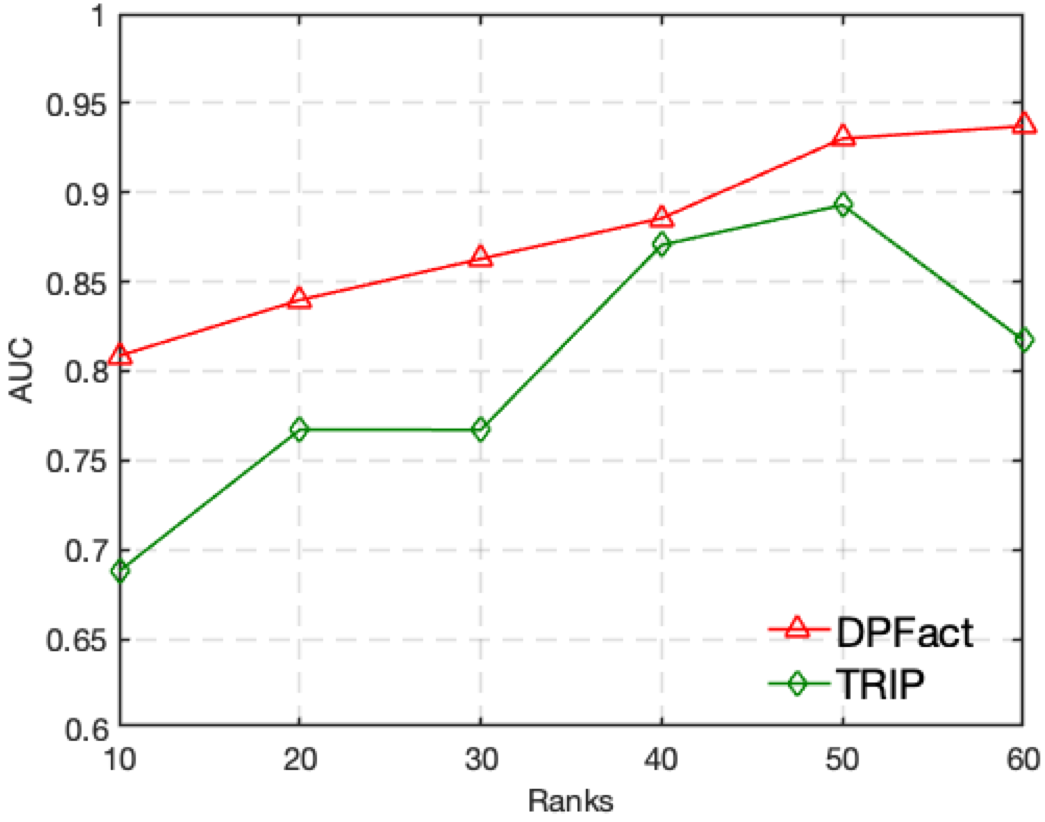}
}
\subfigure[Factor Match Score (FMS)]{
\includegraphics[width=1.55in,trim={1mm 2mm 1mm 1mm},clip]{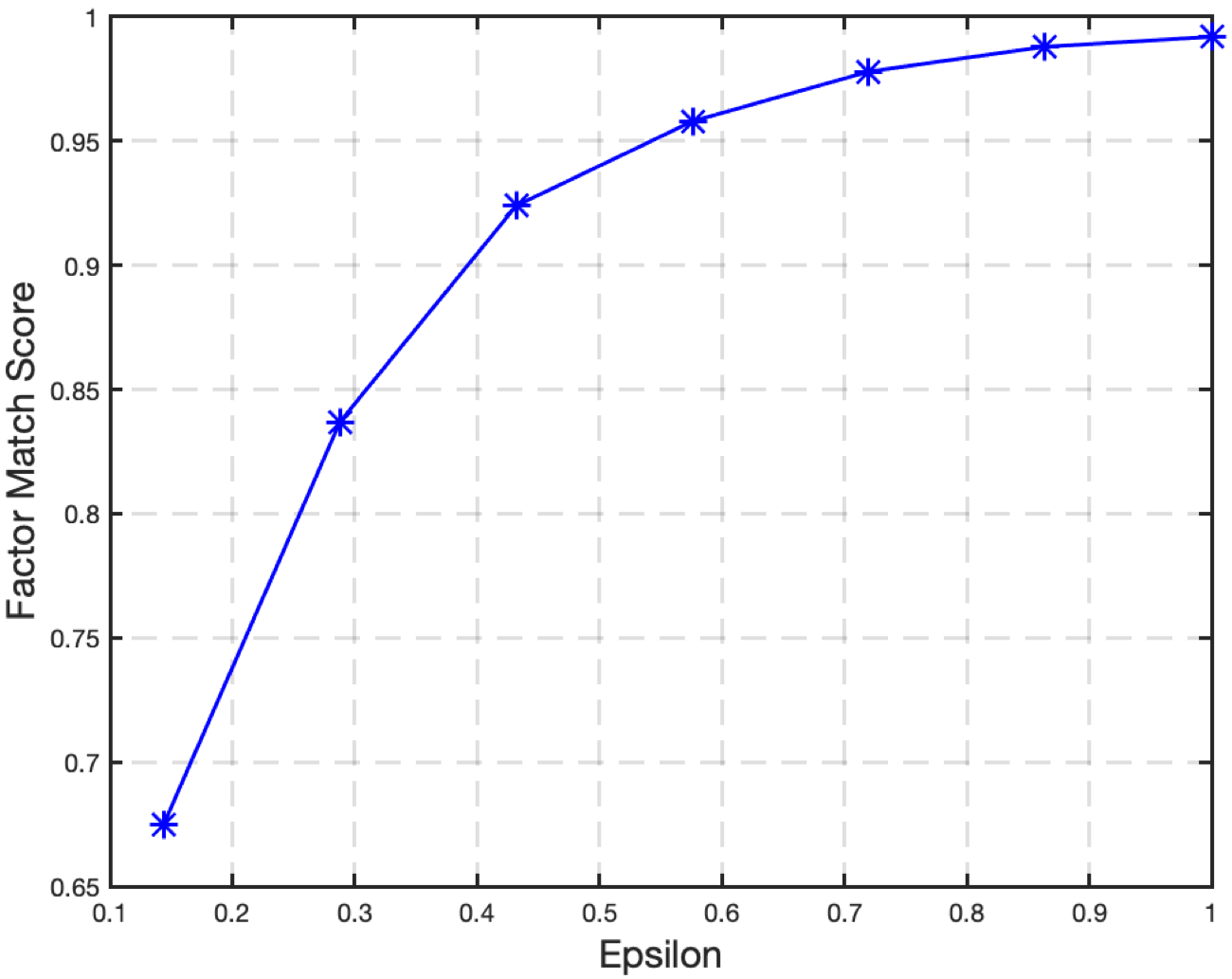}
}
\caption{(a) Predictive performance (AUC) comparison for NICU between (1) TRIP, (2) \methodName. (b) Factor Match Score (FMS) under different privacy budget $(\epsilon)$.}
\label{fig:nicu_auc}
\end{figure}

\begin{table}
  \setlength{\abovecaptionskip}{0cm}
  \setlength{\belowcaptionskip}{0cm}
  \centering
  \begin{tabular}{ p{3cm} r r r r}
  \toprule
    \textbf{Phenotypes} & \textbf{Coef} & \textbf{p-value} & \textbf{$\lambda$} & \textbf{Prevalence}\\
    \midrule
    25: Congenital heart defect & -2.1865 & 0.005 & 198 & 34.32\\
    29: Anemia & 3.5047 & <0.001 & 77 & 13.22\\
    30: Acute kidney injury & \textbf{5.8806} & <0.001 & 68 & 23.38\\
    34: Pneumonia & -5.1050 & <0.001 & 37 & 37.58\\
    35: Respiratory failure & -0.9141 & <0.001 & 85 & 24.40\\
    \bottomrule
\end{tabular}
\caption{Top 5 representative phenotypes from NICU based on the factor weights, $\lambda_r={\left\|{\textbf{A}_{:r}}\right\|}_{F}{\left\|{\textbf{B}_{:r}}\right\|}_{F}{\left\|{\textbf{C}_{:r}}\right\|}_{F}$. Prevalence is the proportion of patients who have non-zero membership to the phenotype.}
\label{table:nicu_stats}
\end{table}

\begin{table}
  \setlength{\abovecaptionskip}{0cm}
  \setlength{\belowcaptionskip}{0cm}
  \small
  \centering
  \renewcommand\arraystretch{0.3}
  \subtable[NICU-specific Phenotypes discovered by \methodName]{
    \begin{tabular}{|p{4cm}|p{4cm}|}
    \hline
    \textbf{Procedures} & \textbf{Diagnoses}\\
    \hline
    {\color{red} Cardiac catheterization} & {\color{blue} Ventricular fibrillation}\\
    {\color{red} Insertion of non-drug-eluting coronary artery stent(s)} & {\color{blue} Unspecified congenital anomaly of heart}\\
    {\color{red} Prophylactic administration of vaccine against other disease} & {\color{blue} Benign essential hypertension}\\
    \hline
   \end{tabular}
  }
  \renewcommand\arraystretch{0.3}
  \subtable[Globally shared phenotype discovered by \methodName]{
    \begin{tabular}{|p{4cm}|p{4cm}|}
    \hline
    \textbf{Procedures} & \textbf{Diagnoses}\\
    \hline
    {\color{red} Attachment of pedicle or flap graft} & {\color{blue} Rheumatic heart failure}\\
    {\color{red} Right heart cardiac catheterization} & {\color{blue} Ventricular fibrillation}\\
    {\color{red} Procedure on two vessels} & {\color{blue} Benign essential hypertension}\\
    {\color{red} Other endovascular procedures on other vessels} & {\color{blue} Paroxysmal ventricular tachycardia} {\color{blue} Nephritis and nephropathy}\\
    {\color{red} Insertion of non-drug-eluting coronary artery stent(s)} & \\
    \hline
   \end{tabular}
  }
  \renewcommand\arraystretch{0.3}
  \subtable[Globally shared phenotype discovered by non-private \methodName]{
   \begin{tabular}{|p{4cm}|p{4cm}|}
    \hline
    \textbf{Procedures} & \textbf{Diagnoses}\\
    \hline
    {\color{red} Right heart cardiac catheterization} & {\color{blue} Hypopotassemia} \\ 
    {\color{red} Attachment of pedicle or flap graft} & {\color{blue} Rheumatic heart failure}\\
    {\color{red} Excision or destruction of other lesion or tissue of heart, open approach} & {\color{blue} Benign essential hypertension \qquad } \qquad {\color{blue} Paroxysmal ventricular tachycardia} {\color{blue} Systolic heart failure}\\
    \hline
   \end{tabular}
  }
  \caption{Example of the representative phenotype. (a) NICU-specific phenotype of Congenital heart defect; (b) and (c) are the globally shared phenotype of Heart failure, showing the difference of \methodName~and non-private \methodName.}
  \label{table:dp}
\end{table}

\subsection{Privacy}

We investigated the impact of differential privacy by comparing \methodName~with its non-private version.
The main difference is that non-private \methodName~does not perturb the local feature factor matrices that are transferred to the server.
We use the factor match score (FMS) \cite{chi2012tensors} to compare the similarity between the phenotype discovered using \methodName~and non-private \methodName.
FMS defined as:
$$
score(\Bar{\mathcal{X}}) = {\frac{1}{R}}\sum\limits_r {\left( 1-{\frac{\left| \xi_r- \Bar{\xi_r} \right|}{max\left\{\xi_r, \Bar{\xi_r} \right\}}} \right)}\prod\limits_{\textbf{x}=\textbf{a,b,c}}{\frac{\textbf{x}_r^T \Bar{\textbf{x}}_r}{{\left\| \textbf{x}_r \right\|}{\left\| \Bar{\textbf{x}}_r \right\|}}},
$$
$$
\xi_r = \prod\limits_{\textbf{x}=\textbf{a,b,c}}{\left\| \textbf{x}_r \right\|}, \Bar{\xi_r} = \prod\limits_{\textbf{x}=\textbf{a,b,c}}{\left\| \Bar{\textbf{x}}_r \right\|}
$$
where $\Bar{\mathcal{X}} = [\![\Bar{\textbf{A}},\Bar{\textbf{B}},\Bar{\textbf{C}}]\!]$ is the estimated factors and $\mathcal{X} = [\![\textbf{A},\textbf{B},\textbf{C}]\!]$ is the true factors. $\textbf{x}_r$ is the $r^{th}$ column of factor matrices.

We treat the non-private version \methodName~factors as the benchmark for \methodName~factors. Figure \ref{fig:nicu_auc}(b) shows how the FMS changes with an increase of the privacy budget.
As the privacy budget becomes larger, the FMS increases accordingly and will gradually approximate 1, which means the discovered phenotypes between the two methods are equivalent.
This result indicates that when a stricter privacy constraint is enforced, it may negatively impact the quality of the phenotypes. Thus, there is a practical need to balance the trade-off between privacy and phenotype quality.

Table \ref{table:dp} presents a comparison between the top 1 (highest factor weight $\lambda_r$) phenotype \methodName-derived phenotype and the closest phenotype derived by its non-private version. We observe that \methodName~contains several additional noisy procedure and diagnosis elements than the non-private \methodName~version.
These extra elements are the results of adding noise to the feature factor matrices.
This is also supported in Table \ref{table:auc} as the non-private \methodName~has better predictive performance than DPFact.
Thus, the output perturbation process may interfere with the interpretability and meaningfulness of the derived phenotype.
However, there is still some utility from the \methodName-derived phenotype as experts can still distinguish this phenotype to be a heart failure phenotype.
Therefore, \methodName~still retains the ability to perform phenotype discovery.

\section{Related Work}
\subsection{Tensor Factorization}
Tensor analysis is an active research topic and has been widely applied to healthcare data \cite{kim2017federated,wang2015rubik, ho2014marble}, especially for computational phenotyping.
Moreover, several algorithms have been developed to scale tensor factorization. GigaTensor \cite{kang2012gigatensor} used MapReduce for large scale CP tensor decomposition that exploits the sparseness of the real world tensors. DFacTo \cite{choi2014dfacto} improves GigaTensor by exploring properties related to the Khatri-Rao Product and achieves faster computation time and better scalability. 
FlexiFaCT \cite{beutel2014flexifact} is a scalable MapReduce algorithm for coupled matrix-tensor decomposition using stochastic gradient descent (SGD). ADMM has also been proved to be an efficient algorithm for distributed tensor factorization \cite{kim2017federated}. However, the above proposed algorithms have the same potential limitation: the distributed data exhibits the same pattern at different local sites. That means each local tensor can be treated as a random sample from the global tensor. Thus, the algorithms are unable to model the scenario where the distribution pattern may be different at each sites. This is common in healthcare as different units (or clinics and hospitals) will have different patient populations, and may not exhibit all the computational phenotypes.

\subsection{Differential Private Factorization}
Differential privacy is widely applied to machine learning areas, especially matrix/tensor factorization, as well as on different distributed optimization frameworks and deep learning problems. Regarding tensor decomposition, there are four ways to enforce differential privacy: input perturbation, output perturbation, objective perturbation and the gradient perturbation. \cite{hua2015differentially} proposed an objective perturbation method for matrix factorization in recommendation systems. \cite{liu2015fast} proposed a new idea that sampling from the posterior distribution of a Bayesian model can sufficiently guarantee differential privacy. \cite{berlioz2015applying} compared the four different perturbation method on matrix factorization and drew the conclusion that input perturbation is the most efficient method that has the least privacy loss on recommendation systems. \cite{wang2016online} is the first proposed differentially private tensor decomposition work. It proposed a noise calibrated tensor power method. Our goal in this paper is to develop a distributed framework where data is stored at different sources, and try to preserve the privacy during knowledge transfer. Nevertheless, these works are based on a centralized framework. \cite{kim2017federated} developed a federated tensor factorization framework, but it simply preserves privacy by avoiding direct patient information sharing, rather than by applying rigorous differential privacy techniques.

\section{Conclusion}
\methodName~is a distributed large-scale tensor decomposition method that enforces differential privacy.
It is well-suited for computational phenotype from multiple sites as well as other collaborative healthcare analysis with multi-way data.
\methodName~allows data to be stored at different sites without requiring a single centralized location to perform the computation.
Moreover, our model recognizes that the learned global latent factors need not be present at all sites, allowing the discovery of both shared and site-specific computational phenotypes.
Furthermore, by adopting a communication-efficient EASGD algorithm, \methodName~greatly reduces the communication overhead.
\methodName~also successfully tackles the privacy issue under the distributed setting with limited privacy loss by the application of zCDP and parallel composition theorem.
Experiments on real-world and synthetic datasets demonstrate that our model outperforms other state-of-the-art methods in terms of communication cost, accuracy, and phenotype discovery ability.
Future work will focus on the asynchronization of the collaborative tensor factorization framework to further optimize the computation efficiency. 

\begin{acks}
This work was supported by the National Science Foundation,
award IIS-\#1838200, National Institute of Health (NIH) under award number R01GM114612, R01GM118609, and U01TR002062, and the National Institute of Health Georgia CTSA UL1TR002378. Dr. Xiaoqian Jiang is CPRIT Scholar in Cancer Research, and he was supported in part by the CPRIT RR180012, UT Stars award.
\end{acks}

\bibliographystyle{ACM-Reference-Format}
\bibliography{CIKM-sigconf}


\begin{thebibliography}{34}


\ifx \showCODEN    \undefined \def \showCODEN     #1{\unskip}     \fi
\ifx \showDOI      \undefined \def \showDOI       #1{#1}\fi
\ifx \showISBNx    \undefined \def \showISBNx     #1{\unskip}     \fi
\ifx \showISBNxiii \undefined \def \showISBNxiii  #1{\unskip}     \fi
\ifx \showISSN     \undefined \def \showISSN      #1{\unskip}     \fi
\ifx \showLCCN     \undefined \def \showLCCN      #1{\unskip}     \fi
\ifx \shownote     \undefined \def \shownote      #1{#1}          \fi
\ifx \showarticletitle \undefined \def \showarticletitle #1{#1}   \fi
\ifx \showURL      \undefined \def \showURL       {\relax}        \fi
\providecommand\bibfield[2]{#2}
\providecommand\bibinfo[2]{#2}
\providecommand\natexlab[1]{#1}
\providecommand\showeprint[2][]{arXiv:#2}

\bibitem[\protect\citeauthoryear{Bader, Kolda, et~al\mbox{.}}{Bader
  et~al\mbox{.}}{2017}]%
        {TTB_Software}
\bibfield{author}{\bibinfo{person}{Brett~W. Bader}, \bibinfo{person}{Tamara~G.
  Kolda}, {et~al\mbox{.}}} \bibinfo{year}{2017}\natexlab{}.
\newblock \bibinfo{title}{MATLAB Tensor Toolbox Version 3.0-dev}.
\newblock \bibinfo{howpublished}{Available online}.
\newblock
\urldef\tempurl%
\url{https://gitlab.com/tensors/tensor_toolbox}
\showURL{%
\tempurl}


\bibitem[\protect\citeauthoryear{Berlioz, Friedman, Kaafar, Boreli, and
  Berkovsky}{Berlioz et~al\mbox{.}}{2015}]%
        {berlioz2015applying}
\bibfield{author}{\bibinfo{person}{Arnaud Berlioz}, \bibinfo{person}{Arik
  Friedman}, \bibinfo{person}{Mohamed~Ali Kaafar}, \bibinfo{person}{Roksana
  Boreli}, {and} \bibinfo{person}{Shlomo Berkovsky}.}
  \bibinfo{year}{2015}\natexlab{}.
\newblock \showarticletitle{Applying differential privacy to matrix
  factorization}. In \bibinfo{booktitle}{\emph{Proceedings of the 9th ACM
  Conference on Recommender Systems}}. ACM, \bibinfo{pages}{107--114}.
\newblock


\bibitem[\protect\citeauthoryear{Beutel, Talukdar, Kumar, Faloutsos,
  Papalexakis, and Xing}{Beutel et~al\mbox{.}}{2014}]%
        {beutel2014flexifact}
\bibfield{author}{\bibinfo{person}{Alex Beutel}, \bibinfo{person}{Partha~Pratim
  Talukdar}, \bibinfo{person}{Abhimanu Kumar}, \bibinfo{person}{Christos
  Faloutsos}, \bibinfo{person}{Evangelos~E Papalexakis}, {and}
  \bibinfo{person}{Eric~P Xing}.} \bibinfo{year}{2014}\natexlab{}.
\newblock \showarticletitle{Flexifact: Scalable flexible factorization of
  coupled tensors on hadoop}. In \bibinfo{booktitle}{\emph{Proceedings of the
  2014 SDM}}. \bibinfo{pages}{109--117}.
\newblock


\bibitem[\protect\citeauthoryear{Bun and Steinke}{Bun and Steinke}{2016}]%
        {bun2016concentrated}
\bibfield{author}{\bibinfo{person}{Mark Bun} {and} \bibinfo{person}{Thomas
  Steinke}.} \bibinfo{year}{2016}\natexlab{}.
\newblock \showarticletitle{Concentrated differential privacy: Simplifications,
  extensions, and lower bounds}. In \bibinfo{booktitle}{\emph{Theory of
  Cryptography Conference}}. Springer, \bibinfo{pages}{635--658}.
\newblock


\bibitem[\protect\citeauthoryear{Chi and Kolda}{Chi and Kolda}{2012}]%
        {chi2012tensors}
\bibfield{author}{\bibinfo{person}{Eric~C Chi} {and} \bibinfo{person}{Tamara~G
  Kolda}.} \bibinfo{year}{2012}\natexlab{}.
\newblock \showarticletitle{On tensors, sparsity, and nonnegative
  factorizations}.
\newblock \bibinfo{journal}{\emph{SIAM J. Matrix Anal. Appl.}}
  \bibinfo{volume}{33}, \bibinfo{number}{4} (\bibinfo{year}{2012}),
  \bibinfo{pages}{1272--1299}.
\newblock


\bibitem[\protect\citeauthoryear{Choi and Vishwanathan}{Choi and
  Vishwanathan}{2014}]%
        {choi2014dfacto}
\bibfield{author}{\bibinfo{person}{Joon~Hee Choi} {and} \bibinfo{person}{S
  Vishwanathan}.} \bibinfo{year}{2014}\natexlab{}.
\newblock \showarticletitle{DFacTo: Distributed factorization of tensors}. In
  \bibinfo{booktitle}{\emph{NIPS}}. \bibinfo{pages}{1296--1304}.
\newblock


\bibitem[\protect\citeauthoryear{Combettes and Pesquet}{Combettes and
  Pesquet}{2011}]%
        {combettes2011proximal}
\bibfield{author}{\bibinfo{person}{Patrick~L Combettes} {and}
  \bibinfo{person}{Jean-Christophe Pesquet}.} \bibinfo{year}{2011}\natexlab{}.
\newblock \showarticletitle{Proximal splitting methods in signal processing}.
\newblock In \bibinfo{booktitle}{\emph{Fixed-point algorithms for inverse
  problems in science and engineering}}. \bibinfo{publisher}{Springer},
  \bibinfo{pages}{185--212}.
\newblock


\bibitem[\protect\citeauthoryear{Dwork, Roth, et~al\mbox{.}}{Dwork
  et~al\mbox{.}}{2014}]%
        {dwork2014algorithmic}
\bibfield{author}{\bibinfo{person}{Cynthia Dwork}, \bibinfo{person}{Aaron
  Roth}, {et~al\mbox{.}}} \bibinfo{year}{2014}\natexlab{}.
\newblock \showarticletitle{The algorithmic foundations of differential
  privacy}.
\newblock \bibinfo{journal}{\emph{Foundations and Trends{\textregistered} in
  Theoretical Computer Science}} \bibinfo{volume}{9}, \bibinfo{number}{3--4}
  (\bibinfo{year}{2014}), \bibinfo{pages}{211--407}.
\newblock


\bibitem[\protect\citeauthoryear{Dwork and Rothblum}{Dwork and
  Rothblum}{2016}]%
        {dwork2016concentrated}
\bibfield{author}{\bibinfo{person}{Cynthia Dwork} {and} \bibinfo{person}{Guy~N
  Rothblum}.} \bibinfo{year}{2016}\natexlab{}.
\newblock \showarticletitle{Concentrated differential privacy}.
\newblock \bibinfo{journal}{\emph{arXiv preprint arXiv:1603.01887}}
  (\bibinfo{year}{2016}).
\newblock


\bibitem[\protect\citeauthoryear{Dwork, Rothblum, and Vadhan}{Dwork
  et~al\mbox{.}}{2010}]%
        {dwork2010boosting}
\bibfield{author}{\bibinfo{person}{Cynthia Dwork}, \bibinfo{person}{Guy~N
  Rothblum}, {and} \bibinfo{person}{Salil Vadhan}.}
  \bibinfo{year}{2010}\natexlab{}.
\newblock \showarticletitle{Boosting and differential privacy}. In
  \bibinfo{booktitle}{\emph{2010 IEEE 51st Annual Symposium on Foundations of
  Computer Science}}. IEEE, \bibinfo{pages}{51--60}.
\newblock


\bibitem[\protect\citeauthoryear{Fredrikson, Jha, and Ristenpart}{Fredrikson
  et~al\mbox{.}}{2015}]%
        {fredrikson2015model}
\bibfield{author}{\bibinfo{person}{Matt Fredrikson}, \bibinfo{person}{Somesh
  Jha}, {and} \bibinfo{person}{Thomas Ristenpart}.}
  \bibinfo{year}{2015}\natexlab{}.
\newblock \showarticletitle{Model inversion attacks that exploit confidence
  information and basic countermeasures}. In
  \bibinfo{booktitle}{\emph{Proceedings of the 22nd ACM SIGSAC Conference on
  Computer and Communications Security}}. ACM, \bibinfo{pages}{1322--1333}.
\newblock


\bibitem[\protect\citeauthoryear{Greenhalgh, Hinder, Stramer, Bratan, and
  Russell}{Greenhalgh et~al\mbox{.}}{2010}]%
        {greenhalgh2010adoption}
\bibfield{author}{\bibinfo{person}{Trisha Greenhalgh}, \bibinfo{person}{Susan
  Hinder}, \bibinfo{person}{Katja Stramer}, \bibinfo{person}{Tanja Bratan},
  {and} \bibinfo{person}{Jill Russell}.} \bibinfo{year}{2010}\natexlab{}.
\newblock \showarticletitle{Adoption, non-adoption, and abandonment of a
  personal electronic health record: case study of HealthSpace}.
\newblock \bibinfo{journal}{\emph{Bmj}}  \bibinfo{volume}{341}
  (\bibinfo{year}{2010}), \bibinfo{pages}{c5814}.
\newblock


\bibitem[\protect\citeauthoryear{Guo and Xue}{Guo and Xue}{2013}]%
        {guo2013probabilistic}
\bibfield{author}{\bibinfo{person}{Yuhong Guo} {and} \bibinfo{person}{Wei
  Xue}.} \bibinfo{year}{2013}\natexlab{}.
\newblock \showarticletitle{Probabilistic Multi-Label Classification with
  Sparse Feature Learning.}. In \bibinfo{booktitle}{\emph{IJCAI}}.
  \bibinfo{pages}{1373--1379}.
\newblock


\bibitem[\protect\citeauthoryear{Hitaj, Ateniese, and Perez-Cruz}{Hitaj
  et~al\mbox{.}}{2017}]%
        {hitaj2017deep}
\bibfield{author}{\bibinfo{person}{Briland Hitaj}, \bibinfo{person}{Giuseppe
  Ateniese}, {and} \bibinfo{person}{Fernando Perez-Cruz}.}
  \bibinfo{year}{2017}\natexlab{}.
\newblock \showarticletitle{Deep models under the GAN: information leakage from
  collaborative deep learning}. In \bibinfo{booktitle}{\emph{Proceedings of the
  2017 ACM SIGSAC Conference on Computer and Communications Security}}. ACM,
  \bibinfo{pages}{603--618}.
\newblock


\bibitem[\protect\citeauthoryear{Ho, Ghosh, and Sun}{Ho et~al\mbox{.}}{2014}]%
        {ho2014marble}
\bibfield{author}{\bibinfo{person}{Joyce~C Ho}, \bibinfo{person}{Joydeep
  Ghosh}, {and} \bibinfo{person}{Jimeng Sun}.} \bibinfo{year}{2014}\natexlab{}.
\newblock \showarticletitle{Marble: high-throughput phenotyping from electronic
  health records via sparse nonnegative tensor factorization}. In
  \bibinfo{booktitle}{\emph{Proceedings of the 20th ACM SIGKDD}}. ACM,
  \bibinfo{pages}{115--124}.
\newblock


\bibitem[\protect\citeauthoryear{Hua, Xia, and Zhong}{Hua
  et~al\mbox{.}}{2015}]%
        {hua2015differentially}
\bibfield{author}{\bibinfo{person}{Jingyu Hua}, \bibinfo{person}{Chang Xia},
  {and} \bibinfo{person}{Sheng Zhong}.} \bibinfo{year}{2015}\natexlab{}.
\newblock \showarticletitle{Differentially Private Matrix Factorization.}. In
  \bibinfo{booktitle}{\emph{IJCAI}}. \bibinfo{pages}{1763--1770}.
\newblock


\bibitem[\protect\citeauthoryear{Johnson, Pollard, Shen, Li-wei, Feng,
  Ghassemi, Moody, Szolovits, Celi, and Mark}{Johnson et~al\mbox{.}}{2016}]%
        {johnson2016mimic}
\bibfield{author}{\bibinfo{person}{Alistair~EW Johnson}, \bibinfo{person}{Tom~J
  Pollard}, \bibinfo{person}{Lu Shen}, \bibinfo{person}{H~Lehman Li-wei},
  \bibinfo{person}{Mengling Feng}, \bibinfo{person}{Mohammad Ghassemi},
  \bibinfo{person}{Benjamin Moody}, \bibinfo{person}{Peter Szolovits},
  \bibinfo{person}{Leo~Anthony Celi}, {and} \bibinfo{person}{Roger~G Mark}.}
  \bibinfo{year}{2016}\natexlab{}.
\newblock \showarticletitle{MIMIC-III, a freely accessible critical care
  database}.
\newblock \bibinfo{journal}{\emph{Scientific data}}  \bibinfo{volume}{3}
  (\bibinfo{year}{2016}), \bibinfo{pages}{160035}.
\newblock


\bibitem[\protect\citeauthoryear{Kang, Papalexakis, Harpale, and
  Faloutsos}{Kang et~al\mbox{.}}{2012}]%
        {kang2012gigatensor}
\bibfield{author}{\bibinfo{person}{U Kang}, \bibinfo{person}{Evangelos
  Papalexakis}, \bibinfo{person}{Abhay Harpale}, {and}
  \bibinfo{person}{Christos Faloutsos}.} \bibinfo{year}{2012}\natexlab{}.
\newblock \showarticletitle{Gigatensor: scaling tensor analysis up by 100
  times-algorithms and discoveries}. In \bibinfo{booktitle}{\emph{Proceedings
  of the 18th ACM SIGKDD}}. ACM, \bibinfo{pages}{316--324}.
\newblock


\bibitem[\protect\citeauthoryear{Kim, El-Kareh, Sun, Yu, and Jiang}{Kim
  et~al\mbox{.}}{2017a}]%
        {kim2017discriminative}
\bibfield{author}{\bibinfo{person}{Yejin Kim}, \bibinfo{person}{Robert
  El-Kareh}, \bibinfo{person}{Jimeng Sun}, \bibinfo{person}{Hwanjo Yu}, {and}
  \bibinfo{person}{Xiaoqian Jiang}.} \bibinfo{year}{2017}\natexlab{a}.
\newblock \showarticletitle{Discriminative and distinct phenotyping by
  constrained tensor factorization}.
\newblock \bibinfo{journal}{\emph{Scientific reports}} \bibinfo{volume}{7},
  \bibinfo{number}{1} (\bibinfo{year}{2017}), \bibinfo{pages}{1114}.
\newblock


\bibitem[\protect\citeauthoryear{Kim, Sun, Yu, and Jiang}{Kim
  et~al\mbox{.}}{2017b}]%
        {kim2017federated}
\bibfield{author}{\bibinfo{person}{Yejin Kim}, \bibinfo{person}{Jimeng Sun},
  \bibinfo{person}{Hwanjo Yu}, {and} \bibinfo{person}{Xiaoqian Jiang}.}
  \bibinfo{year}{2017}\natexlab{b}.
\newblock \showarticletitle{Federated tensor factorization for computational
  phenotyping}. In \bibinfo{booktitle}{\emph{Proceedings of the 23rd ACM
  SIGKDD}}. ACM, \bibinfo{pages}{887--895}.
\newblock


\bibitem[\protect\citeauthoryear{Liu, Ji, and Ye}{Liu et~al\mbox{.}}{2009}]%
        {liu2009multi}
\bibfield{author}{\bibinfo{person}{Jun Liu}, \bibinfo{person}{Shuiwang Ji},
  {and} \bibinfo{person}{Jieping Ye}.} \bibinfo{year}{2009}\natexlab{}.
\newblock \showarticletitle{Multi-task feature learning via efficient $\ell_{2,
  1}$-norm minimization}. In \bibinfo{booktitle}{\emph{UAI}}.
  \bibinfo{pages}{339--348}.
\newblock


\bibitem[\protect\citeauthoryear{Liu, Wang, and Smola}{Liu
  et~al\mbox{.}}{2015}]%
        {liu2015fast}
\bibfield{author}{\bibinfo{person}{Ziqi Liu}, \bibinfo{person}{Yu-Xiang Wang},
  {and} \bibinfo{person}{Alexander Smola}.} \bibinfo{year}{2015}\natexlab{}.
\newblock \showarticletitle{Fast differentially private matrix factorization}.
  In \bibinfo{booktitle}{\emph{Proceedings of the 9th ACM RecSys}}.
  \bibinfo{pages}{171--178}.
\newblock


\bibitem[\protect\citeauthoryear{Nie, Huang, Cai, and Ding}{Nie
  et~al\mbox{.}}{2010}]%
        {nie2010efficient}
\bibfield{author}{\bibinfo{person}{Feiping Nie}, \bibinfo{person}{Heng Huang},
  \bibinfo{person}{Xiao Cai}, {and} \bibinfo{person}{Chris~H Ding}.}
  \bibinfo{year}{2010}\natexlab{}.
\newblock \showarticletitle{Efficient and robust feature selection via joint
  l2, 1-norms minimization}. In \bibinfo{booktitle}{\emph{NeurIPS}}.
  \bibinfo{pages}{1813--1821}.
\newblock


\bibitem[\protect\citeauthoryear{Richesson, Sun, Pathak, Kho, and
  Denny}{Richesson et~al\mbox{.}}{2016}]%
        {richesson2016clinical}
\bibfield{author}{\bibinfo{person}{Rachel~L Richesson}, \bibinfo{person}{Jimeng
  Sun}, \bibinfo{person}{Jyotishman Pathak}, \bibinfo{person}{Abel~N Kho},
  {and} \bibinfo{person}{Joshua~C Denny}.} \bibinfo{year}{2016}\natexlab{}.
\newblock \showarticletitle{Clinical phenotyping in selected national networks:
  demonstrating the need for high-throughput, portable, and computational
  methods}.
\newblock \bibinfo{journal}{\emph{Artificial intelligence in medicine}}
  \bibinfo{volume}{71} (\bibinfo{year}{2016}), \bibinfo{pages}{57--61}.
\newblock


\bibitem[\protect\citeauthoryear{Shokri, Stronati, Song, and Shmatikov}{Shokri
  et~al\mbox{.}}{2017}]%
        {shokri2017membership}
\bibfield{author}{\bibinfo{person}{Reza Shokri}, \bibinfo{person}{Marco
  Stronati}, \bibinfo{person}{Congzheng Song}, {and} \bibinfo{person}{Vitaly
  Shmatikov}.} \bibinfo{year}{2017}\natexlab{}.
\newblock \showarticletitle{Membership inference attacks against machine
  learning models}. In \bibinfo{booktitle}{\emph{Security and Privacy (SP),
  2017 IEEE Symposium on}}. IEEE, \bibinfo{pages}{3--18}.
\newblock


\bibitem[\protect\citeauthoryear{Wang and Anandkumar}{Wang and
  Anandkumar}{2016}]%
        {wang2016online}
\bibfield{author}{\bibinfo{person}{Yining Wang} {and} \bibinfo{person}{Anima
  Anandkumar}.} \bibinfo{year}{2016}\natexlab{}.
\newblock \showarticletitle{Online and differentially-private tensor
  decomposition}. In \bibinfo{booktitle}{\emph{NeurIPS}}.
  \bibinfo{pages}{3531--3539}.
\newblock


\bibitem[\protect\citeauthoryear{Wang, Chen, Ghosh, Denny, Kho, Chen, Malin,
  and Sun}{Wang et~al\mbox{.}}{2015}]%
        {wang2015rubik}
\bibfield{author}{\bibinfo{person}{Yichen Wang}, \bibinfo{person}{Robert Chen},
  \bibinfo{person}{Joydeep Ghosh}, \bibinfo{person}{Joshua~C Denny},
  \bibinfo{person}{Abel Kho}, \bibinfo{person}{You Chen},
  \bibinfo{person}{Bradley~A Malin}, {and} \bibinfo{person}{Jimeng Sun}.}
  \bibinfo{year}{2015}\natexlab{}.
\newblock \showarticletitle{Rubik: Knowledge guided tensor factorization and
  completion for health data analytics}. In
  \bibinfo{booktitle}{\emph{Proceedings of the 21th ACM SIGKDD}}. ACM,
  \bibinfo{pages}{1265--1274}.
\newblock


\bibitem[\protect\citeauthoryear{Wei and Denny}{Wei and Denny}{2015}]%
        {wei2015extracting}
\bibfield{author}{\bibinfo{person}{Wei-Qi Wei} {and} \bibinfo{person}{Joshua~C
  Denny}.} \bibinfo{year}{2015}\natexlab{}.
\newblock \showarticletitle{Extracting research-quality phenotypes from
  electronic health records to support precision medicine}.
\newblock \bibinfo{journal}{\emph{Genome medicine}} \bibinfo{volume}{7},
  \bibinfo{number}{1} (\bibinfo{year}{2015}), \bibinfo{pages}{41}.
\newblock


\bibitem[\protect\citeauthoryear{Wu, Li, Kumar, Chaudhuri, Jha, and
  Naughton}{Wu et~al\mbox{.}}{2017}]%
        {wu2017bolt}
\bibfield{author}{\bibinfo{person}{Xi Wu}, \bibinfo{person}{Fengan Li},
  \bibinfo{person}{Arun Kumar}, \bibinfo{person}{Kamalika Chaudhuri},
  \bibinfo{person}{Somesh Jha}, {and} \bibinfo{person}{Jeffrey Naughton}.}
  \bibinfo{year}{2017}\natexlab{}.
\newblock \showarticletitle{Bolt-on differential privacy for scalable
  stochastic gradient descent-based analytics}. In
  \bibinfo{booktitle}{\emph{Proceedings of the 2017 ACM International
  Conference on Management of Data}}. ACM, \bibinfo{pages}{1307--1322}.
\newblock


\bibitem[\protect\citeauthoryear{Xu, Li, Lin, Normand, Lagu, Desai, Duan,
  Kroch, and Krumholz}{Xu et~al\mbox{.}}{2016}]%
        {xu2016hospital}
\bibfield{author}{\bibinfo{person}{Xiao Xu}, \bibinfo{person}{Shu-Xia Li},
  \bibinfo{person}{Haiqun Lin}, \bibinfo{person}{SL Normand},
  \bibinfo{person}{Tara Lagu}, \bibinfo{person}{Nihar Desai},
  \bibinfo{person}{Michael Duan}, \bibinfo{person}{Eugene~A Kroch}, {and}
  \bibinfo{person}{Harlan~M Krumholz}.} \bibinfo{year}{2016}\natexlab{}.
\newblock \showarticletitle{Hospital Phenotypes in the Management of Patients
  Admitted for Acute Myocardial Infarction.}
\newblock \bibinfo{journal}{\emph{Medical care}} \bibinfo{volume}{54},
  \bibinfo{number}{10} (\bibinfo{year}{2016}), \bibinfo{pages}{929--936}.
\newblock


\bibitem[\protect\citeauthoryear{Yang, Shen, Ma, Huang, and Zhou}{Yang
  et~al\mbox{.}}{2011}]%
        {yang2011l2}
\bibfield{author}{\bibinfo{person}{Yi Yang}, \bibinfo{person}{Heng~Tao Shen},
  \bibinfo{person}{Zhigang Ma}, \bibinfo{person}{Zi Huang}, {and}
  \bibinfo{person}{Xiaofang Zhou}.} \bibinfo{year}{2011}\natexlab{}.
\newblock \showarticletitle{l2, 1-norm regularized discriminative feature
  selection for unsupervised learning}. In \bibinfo{booktitle}{\emph{IJCAI}},
  Vol.~\bibinfo{volume}{22}. \bibinfo{pages}{1589}.
\newblock


\bibitem[\protect\citeauthoryear{Youssef, Abd-Elrahman, Shehab, Abd-Elrheem,
  et~al\mbox{.}}{Youssef et~al\mbox{.}}{2015}]%
        {youssef2015incidence}
\bibfield{author}{\bibinfo{person}{Doaa Youssef}, \bibinfo{person}{Hadeel
  Abd-Elrahman}, \bibinfo{person}{Mohamed~M Shehab}, \bibinfo{person}{Mohamed
  Abd-Elrheem}, {et~al\mbox{.}}} \bibinfo{year}{2015}\natexlab{}.
\newblock \showarticletitle{Incidence of acute kidney injury in the neonatal
  intensive care unit}.
\newblock \bibinfo{journal}{\emph{Saudi journal of kidney diseases and
  transplantation}} \bibinfo{volume}{26}, \bibinfo{number}{1}
  (\bibinfo{year}{2015}), \bibinfo{pages}{67}.
\newblock


\bibitem[\protect\citeauthoryear{Yu, Liu, Pu, Gursoy, and Truex}{Yu
  et~al\mbox{.}}{2019}]%
        {yu2019differentially}
\bibfield{author}{\bibinfo{person}{Lei Yu}, \bibinfo{person}{Ling Liu},
  \bibinfo{person}{Calton Pu}, \bibinfo{person}{Mehmet~Emre Gursoy}, {and}
  \bibinfo{person}{Stacey Truex}.} \bibinfo{year}{2019}\natexlab{}.
\newblock \showarticletitle{Differentially Private Model Publishing for Deep
  Learning}.
\newblock \bibinfo{journal}{\emph{arXiv preprint arXiv:1904.02200}}
  (\bibinfo{year}{2019}).
\newblock


\bibitem[\protect\citeauthoryear{Zhang, Choromanska, and LeCun}{Zhang
  et~al\mbox{.}}{2015}]%
        {zhang2015deep}
\bibfield{author}{\bibinfo{person}{Sixin Zhang}, \bibinfo{person}{Anna~E
  Choromanska}, {and} \bibinfo{person}{Yann LeCun}.}
  \bibinfo{year}{2015}\natexlab{}.
\newblock \showarticletitle{Deep learning with elastic averaging SGD}. In
  \bibinfo{booktitle}{\emph{NeurIPS}}. \bibinfo{pages}{685--693}.
\newblock


\end{thebibliography}

\clearpage
\appendix
\section{Appendix}
\label{sec:appendix}

This section provides supplementary information.

\subsection{$Lipschitz$ Constant}
Below is the calculation of the $Lipschitz$ constant in objective function \eqref{eq:obj_feature}, which will be used for calculating $\beta$ in section \ref{sec:priv}. Since $\textbf{B}^{[t]}$ and $\textbf{C}^{[t]}$ play the same role in the objective function in \eqref{eq:obj_feature}, we take $\textbf{B}^{[t]}$ as an example.
$f(\textbf{B}^{[t]})$ can be rewritten as
\begin{equation}
f(\mathbf{B}^{[t]})=\underbrace{{\frac{1}{2}}{\left\| O^{[t]}_{(n)}-\mathbf{B}^{[t]}\bm{\Pi}_B \right\|}_F^2}_{\mathcal{F}_B}+\underbrace{{\frac{\gamma}{2}}{\left\| \mathbf{B}^{[t]} - \mathbf{B}\right\|}_F^2}_{\mathcal{Q}_B},
\label{eq:matricized_obj}
\end{equation}
where $\mathcal{O}_{(n)}^{[t]}$ is the matricization of the local observed tensor $\mathcal{O}^{[t]}$, $\bm{\Pi}_B = \textbf{C}^{[t]}\odot \textbf{A}^{[t]}$. Thus $f(\textbf{B}^{[t]})$ is the combination of $\mathcal{F}_B$ and $\mathcal{Q}_B$. We provide analysis of $Lipschitz$ continuity of \eqref{eq:matricized_obj} separately for $\mathcal{F}_B$ and $\mathcal{Q}_B$. The analysis for $\mathcal{F}_B$ could also be adopted into equation \eqref{eq:patnorm} in section \ref{sec:pfm}.

The gradient of $\mathcal{F}_B({\textbf{B}}^{[t]})$ is calculated as
$\nabla\mathcal{F}_B({\textbf{B}}^{[t]})=-O^{[t]}_{(n)}\bm{\Pi}_B+{\textbf{B}^{[t]}} \bm{\Gamma}_B$, 
where $\bm{\Gamma}_B=(\textbf{A}^{[t]\top}\textbf{A}^{[t]})*(\textbf{C}^{[t]\top}\textbf{C}^{[t]})$. Furthermore, for any ${\textbf{B}}^{[t]}_1, {\textbf{B}}^{[t]}_2 \in R_{+}^{j\times n}$, we have
\begin{equation}
\begin{aligned}
{\left\| \nabla\mathcal{F}_B({\textbf{B}}^{[t]}_1)-\nabla\mathcal{F}_B({\textbf{B}}^{[t]}_2) \right\|}_F 
& = {\left\| {\textbf{B}^{[t]}_1}\bm{\Gamma}_B-{\textbf{B}^{[t]}}_2 \bm{\Gamma}_B\right\|}_F \\
& = {\left\| ({\textbf{B}^{[t]}_1}-{\textbf{B}^{[t]}_2}) \bm{\Gamma}_B\right\|}_F \\
& \leq {\left\| \bm{\Gamma}_B \right\|}_F{\left\| {\mathbf{B}^{[t]}_1}-{\mathbf{B}^{[t]}_2} \right\|}_F. \\
\end{aligned}
\label{eq:fb}
\end{equation}

The gradient of $\mathcal{Q}_B$ is calculated as
$\nabla\mathcal{Q}_B({\textbf{B}}^{[t]})=\gamma \textbf{B}^{[t]}$. Similar to $\nabla\mathcal{F}_B({\textbf{B}}^{[t]})$, for any ${\textbf{B}}^{[t]}_1, {\textbf{B}}^{[t]}_2 \in R_{+}^{j\times n}$, we have
\begin{equation}
\begin{aligned}
{\left\| \nabla\mathcal{F}_B({\textbf{B}}^{[t]}_1)-\nabla\mathcal{F}_B({\textbf{B}}^{[t]}_2) \right\|}_F
& = {\left\| \gamma \textbf{B}^{[t]}_1 - \gamma \textbf{B}^{[t]}_2 \right\|}_F \\
& = {\left\| \gamma I_j (\textbf{B}^{[t]}_1 - \textbf{B}^{[t]}_2)\right\|}_F\\
& \leq {\left\| \gamma I_j \right\|}_F {\left\| \textbf{B}^{[t]}_1 - \textbf{B}^{[t]}_2 \right\|}_F
\end{aligned}
\label{eq:qb}
\end{equation}

By combining the results in \eqref{eq:fb} and \eqref{eq:qb}, we thus get the $Lipschitz$ constant of $\nabla f(\textbf{B}^{[t]})$ as the Frobenius norm of
$$
(\textbf{A}^{[t]\top}\textbf{A}^{[t]})*(\textbf{C}^{[t]\top}\textbf{C}^{[t]})+\gamma I_j.
$$

\subsection{$L_{2,1}$ Regularization Parameter}
Figure \ref{fig:icu_norms} illustrates the effect of choosing different value of $\mu$ on the column norm of the patient matrix for each ICUs in MIMIC-III dataset. We observe that smaller $\mu$ has minimal effect on the column sparsity, as there are no columns that are set to 0. However, if we set $\mu$ to be too high (i.e., $\mu = \{5, 6, 8, 6, 5, 0.9\}$ for each ICU respectively), then it ``turns off" a large portion of the factors and prevents \methodName~from generating useful phenotypes.
Based on the figure, we choose $\mu = \{1, 1.8, 3.2, 1.8, 1.5, 0.6\}$ for TSICU, SICU, MICU, CSRU, CCU, NICU respectively, as the optimal solution for MIMIC-III as there are still noticeable differences in the column magnitude (i.e, the phenotypes have a natural ordering within each location) but also provides flexibility to have at least one unshared column (see component 2 and 4).
\begin{figure*}[hbp]
\setlength{\abovecaptionskip}{0cm}
\setlength{\belowcaptionskip}{0cm}
\centering
\includegraphics[width=6.75in,trim={0.5mm 0.5mm 0.5mm 0.5mm},clip]{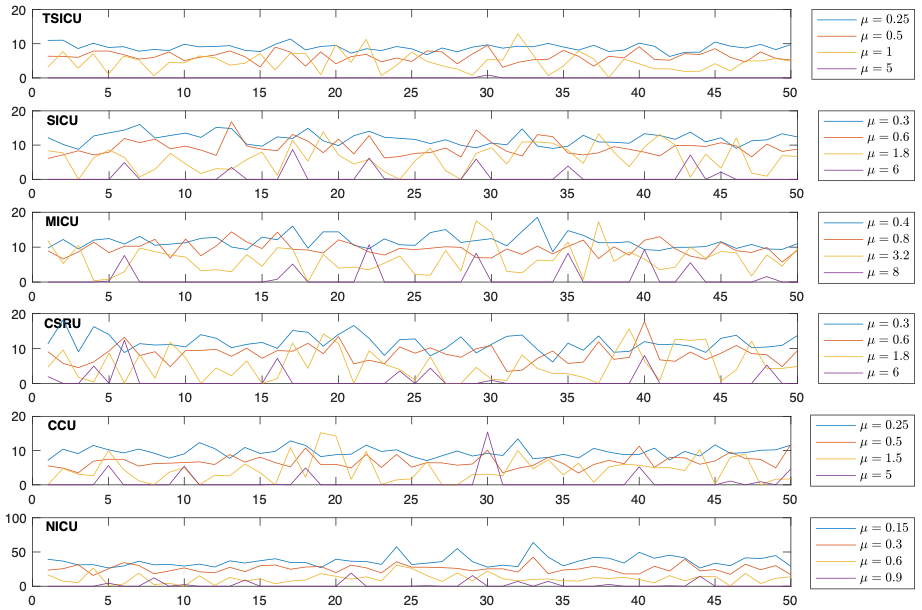}
\caption{Norm of each ICU with different regularization term $\mu$ \\(y-axis is the norm, x-axis is the rank)}
\label{fig:icu_norms}
\end{figure*}

\subsection{Phenotype Selection}
Table \ref{table:overall_stats} provides information in supplementary for phenotype selection of the overall pattern. Similar to table \ref{table:nicu_stats}, among the 50 phenotypes generated by \methodName, we selected 20 phenotypes that are statistically significant for mortality prediction. We reported 6 representative phenotypes which has the highest factor weights ($\lambda$).

\begin{table}
  \setlength{\abovecaptionskip}{0cm}
  \setlength{\belowcaptionskip}{0cm}
  \centering
  \begin{tabular}{ p{2.6cm} r r r r}
  \toprule
    \textbf{Phenotypes} & \textbf{Coef} & \textbf{p-value} & \textbf{$\lambda$} & \textbf{Prevalence}\\
    \midrule
    2 & 1.46 & <0.001 & 163 & 21.67\\
    3: Hypertension  & -1.53 & <0.001 & 249 & 20.57\\
    6 & -1.19 & <0.001 & 145 & 23.53\\
    7 & -2.34 & <0.001 & 49 & 18.17\\
    8 & 2.89 & <0.001 & 162 & 24.31\\
    9 & 2.88 & <0.001 & 116 & 22.81\\
    14 & -1.72 & <0.001 & 94 & 18.25\\
    16 & 2.29 & <0.001 & 79 & 17.48\\
    17  & 4.21 & <0.001 & 166 & 21.96\\
    18 & 3.66 & <0.001 & 69 & 16.79\\
    20 & -1.51 & <0.001 & 95 & 20.72\\
    22 & 2.17 & <0.001 & 137 & 19.19\\
    25: Heart failure & -6.56 & <0.001 & 278 & 16.40\\
    30: Acute kidney injury & 0.46 & <0.001 & 203 & 26.35\\
    32 & 1.65 & <0.001 & 109 & 18.73\\
    37 & 1.42 & <0.001 & 116 & 22.50\\
    42: Gastritis and gastroduodenitis & -2.24 & <0.001 & 187 & 22.88\\
    47: Cardiac surgery & -2.94 & <0.001 & 223 & 16.55\\
    49 & 2.91 & <0.001 & 113 & 22.73\\
    50: Chronic ischemic heart disease & 1.38 & <0.001 & 207 & 27.45\\
    \bottomrule
\end{tabular}
\caption{Logistic regression results for phenotype selection}
\label{table:overall_stats}
\end{table}

\begin{table}
  \setlength{\abovecaptionskip}{0cm}
  \setlength{\belowcaptionskip}{0cm}
  \centering
  \begin{tabular}{l r r r}
  \toprule
    \textbf{\# of Sites} & \textbf{MIMIC-III} & \textbf{CMS} & \textbf{Synthetic}\\
    \midrule
    1 & 18.73 & 22.89 & 1.55\\
    5 & 93.62 & 114.42 & 7.75\\
    10 & 189.83  & 228.83 & 15.50\\
    \bottomrule
\end{tabular}
\caption{Communication cost of \methodName~for different number of sites (Seconds)}
\label{tab:commu2}
\end{table}

\end{document}